\newtheorem{theorem}{Theorem}[section]
\newtheorem{proposition}[theorem]{Proposition}
\newtheorem{lemma}[theorem]{Lemma}
\newtheorem{definition}[theorem]{Definition}
\newtheorem{remark}[theorem]{Remark}
\def\argmax{\mathop{\rm arg\, max}}
\newcommand{\LineComment}[1]{\STATE \texttt{\textcolor{blue}{$//$ #1 }}}
\title{Quantum Lipschitz Bandits}
\author
{
Bongsoo Yi\textsuperscript{\rm 1}, 
Yue Kang\textsuperscript{\rm 2}, 
Yao Li\textsuperscript{\rm 1} 
}
\begin{document}

\maketitle

\begin{abstract}
The Lipschitz bandit is a key variant of stochastic bandit problems where the expected reward function satisfies a Lipschitz condition with respect to an arm metric space. With its wide-ranging practical applications, various Lipschitz bandit algorithms have been developed, achieving the cumulative regret lower bound of order $\tilde O(T^{(d_z+1)/(d_z+2)})$\footnote{$\tilde{O}(\cdot)$ suppresses polylogarithmic factors, and $d_z$ is the zooming dimension defined in Section~\ref{sec:prelim}.} over time horizon $T$. Motivated by recent advancements in quantum computing and the demonstrated success of quantum Monte Carlo in simpler bandit settings, we introduce the first quantum Lipschitz bandit algorithms to address the challenges of continuous action spaces and non-linear reward functions. Specifically, we first leverage the elimination-based framework to propose an efficient quantum Lipschitz bandit algorithm named \text{Q-LAE}. Next, we present novel modifications to the classical Zooming algorithm~\cite{kleinberg2008multi}, which results in a simple quantum Lipschitz bandit method, \text{Q-Zooming}. Both algorithms exploit the computational power of quantum methods to achieve an improved regret bound of $\tilde O(T^{d_z/(d_z+1)})$. Comprehensive experiments further validate our improved theoretical findings, demonstrating superior empirical performance compared to existing Lipschitz bandit methods.
\end{abstract}

\section{Introduction}

The multi-armed bandit~\cite{slivkins2019introduction} is a fundamental and versatile framework for sequential decision-making problems, with applications in online recommendation~\cite{li2010contextual}, prompt engineering~\cite{lin2023use}, clinical trials~\cite{villar2015multi} and hyperparameter tuning~\cite{ding2022syndicated}. An important extension of this framework is the Lipschitz bandit~\cite{kleinberg2019bandits}, which addresses bandit problems with a continuous and infinite set of arms defined within a known metric space. The expected reward function in this setting satisfies a Lipschitz condition, ensuring that similar actions yield similar rewards. By leveraging the structure of the metric space, the Lipschitz bandit provides an effective and practical framework for solving complex problems with large or infinite arm sets, making it broadly applicable to real-world scenarios~\cite{mao2018contextual, feng2023lipschitz, tacs2021efficient}. After extensive studies on this intriguing problem~\cite{kleinberg2019bandits,slivkins2011contextual}, it is well-established that the cumulative regret lower bound for any Lipschitz bandit algorithm is $\tilde O\left(T^{{(d_z+1)}/{(d_z+2)}}\right)$, where $T$ is the time horizon and $d_z$ is the zooming dimension. While state-of-the-art algorithms achieve this optimal bound, the question remains whether more advanced techniques can further enhance performance. In this context, quantum computation~\cite{divincenzo1995quantum, biamonte2017quantum} presents an exciting opportunity to accelerate algorithmic performance and unlock new possibilities in solving Lipschitz bandit problems.

Quantum computation has been successfully integrated into various bandit frameworks, including multi-armed bandits~\cite{wan2023quantum}, kernelized bandits~\cite{dai2024quantum, hikima2024quantum}, and heavy-tailed bandits~\cite{wu2023quantum}, where it has demonstrated significant improvements in regret performance. In these quantum bandit settings, rather than receiving immediate reward feedback, we interact with quantum oracles that encode the reward distribution for each chosen arm. The Quantum Monte Carlo (QMC) method~\cite{montanaro2015quantum} is then utilized to efficiently estimate the expected rewards, requiring fewer queries than classical approaches. 
Despite these advancements, the application of quantum computing to Lipschitz bandits remains unexplored. Although Lipschitz bandits may appear as a natural extension of multi-armed bandits, they introduce fundamentally greater analytical challenges.
Unlike the discrete and finite action space in classical bandits, Lipschitz bandits operate over a continuous and uncountably infinite arm space, with an unknown and non-linear reward function. This complexity makes direct extensions of existing quantum algorithms infeasible. 
Therefore, novel methodologies are required to effectively bridge quantum computing with the Lipschitz bandit framework.

\begin{table*}[t]
\caption{Comparison of regret bounds on Lipschitz bandits.}\label{table:regret}
\centering
\small{
\begin{sc}
\begin{tabular}{lcccc}
\toprule
Algorithm & Reference & Setting & Noise & Regret Bound \\
\midrule
\multicolumn{2}{l}{\cellcolor{gray!30}}&  Classical& sub-Gaussian & $ O \left( T^{\frac{d_z+1}{d_z+2}}  (\log T)^{ \frac{1}{d_z+2}} \right)$\\
\midrule
\multirow{2}{*}{\centering Q-LAE}  & Theorem~\ref{thm:blin} & Quantum & bounded value & $ O \left( T^{\frac{d_z}{d_z+1}}  (\log T)^{ \frac{2}{d_z+1}} \right)$  \\
    & Theorem~\ref{thm:blin_bounded_variance} &  Quantum & bounded variance & $ O \left( T^{\frac{d_z}{d_z+1}}  (\log T)^{\frac72 \frac{1}{d_z+1}} (\log\log T)^{\frac{1}{d_z+1}} \right)$\\
\midrule
\multirow{2}{*}{\centering Q-Zooming} & Theorem~\ref{thm:q-zooming} & Quantum &  bounded value & $ O \left( T^{\frac{d_z}{d_z+1}}  (\log T)^{ \frac{1}{d_z+1}} \right)$ \\
   & Theorem~\ref{thm:zooming_bounded_variance} & Quantum & bounded variance & $ O \left( T^{\frac{d_z}{d_z+1}}  (\log T)^{\frac52 \frac{1}{d_z+1}} (\log\log T)^{\frac{1}{d_z+1}} \right)$ \\
\bottomrule
\end{tabular}
\end{sc}
}
\end{table*}

In this work, we propose the first two effective quantum Lipschitz bandit algorithms that achieve improved regret bounds by employing quantum computing techniques. Our main contributions are summarized as follows:

\begin{itemize}
    \item We propose an elimination-based algorithm named \textit{Quantum Lipschitz Adaptive Elimination} (Q-LAE), which achieves an improved regret bound of order $\tilde O\left(T^{{d_z}/{(d_z+1)}}\right)$ under two standard noise assumptions. The algorithm leverages the concepts of covering and maximal packing, making it applicable to general metric spaces. Compared to the existing elimination-based method, our algorithm adopts a more rigorous definition of the zooming dimension $d_z$, detailed in Section~\ref{sec:Q-LAE}.
    \item We introduce the \textit{Quantum Zooming} (Q-Zooming) Algorithm, which extends the classical zooming algorithm to the quantum Lipschitz bandit setting, also achieving the regret bound of order $\tilde O\left(T^{{d_z}/{(d_z+1)}}\right)$. This extension is non-trivial, requiring substantial modifications such as a stage-based design that efficiently leverages quantum oracles and integrates the QMC method.
    \item We evaluate the performance of our proposed algorithms, Q-LAE and Q-Zooming, through numerical experiments. The results demonstrate that both algorithms consistently outperform the classical Zooming algorithm across a variety of  Lipschitz bandit scenarios.
\end{itemize}
Table~\ref{table:regret} outlines the regret bounds achieved by our methods along with state-of-the-art classical Lipschitz bandit algorithms under different noise assumptions.

\section{Related Work}

This section reviews prior work on quantum online learning, including quantum bandits and quantum reinforcement learning. Due to space constraints, related work on Lipschitz bandits is deferred to Appendix C.

Prior work on bandit algorithms has made significant strides in integrating quantum computing into various bandit problems. \citet{wan2023quantum} first incorporated quantum computing into the multi-armed and stochastic linear bandits with the linear reward model and showed that an improved regret bound over the classical lower bound could be attained. \citet{wu2023quantum} extended this line of research to quantum bandits with heavy-tailed rewards, while \citet{li2022quantum} explored the quantum stochastic convex optimization problem. The best-arm identification problem in quantum multi-armed bandits has also been studied in \cite{wang2021quantum,casale2020quantum}. Parallel to quantum bandits, significant progress has been made in quantum reinforcement learning, particularly under linear reward models, as summarized in the comprehensive survey by \citet{meyer2022survey}. For instance, \citet{wang2021quantumrl} showed that their quantum reinforcement learning algorithm could achieve better sample complexity than classical methods under a generative model of the environment. Furthermore, \citet{ganguly2023quantum} achieved improved regret bounds in the quantum setting without relying on a generative model. To deal with non-linear reward models, \citet{dai2024quantum} proposed a simple quantum algorithm for kernelized bandits by improving the confidence ellipsoid, and then an improved quantum kernelized bandit method was developed in~\cite{hikima2024quantum}. However, unlike kernelized bandits, which leverage well-defined kernel functions to structure the action space, Lipschitz bandits face additional challenges due to the inherent complexity of non-linear reward functions and the unstructured, diverse nature of arm metric spaces. As a result, the application of quantum computing to Lipschitz bandits remains an open and significantly more difficult problem, one that we aim to address in this work.

\section{Problem Setting and Preliminaries}\label{sec:prelim}

In this section, we present the problem setting of the Lipschitz bandit problem and provide essential background on quantum computation, forming the foundation for the quantum bandit framework and our methods.

\subsection{Lipschitz Bandits} 
The Lipschitz bandit problem can be characterized by a triplet \( (X, \mathcal{D}, \mu) \), where \( X \) denotes the action space of arms, \( \mathcal{D} \) is a metric on \( X \), and \( \mu : X \to \mathbb{R} \) represents the unknown expected reward function. The reward function \( \mu \) is assumed to be \( 1 \)-Lipschitz with respect to the metric space $(X,\mathcal{D})$, which implies:
\begin{equation*}
    |\mu(x_1) - \mu(x_2)| \leq \mathcal{D}(x_1, x_2), \quad \forall x_1, x_2 \in X.
\end{equation*}
Without loss of generality, we assume $(X,\mathcal{D})$ is a compact doubling metric space with its diameter no more than $1$.

At each time step \( t \leq T \), the agent selects an arm \( x_t \in X \). The stochastic reward associated with the chosen arm is drawn from an unknown distribution \( P_x \) and is observed as $y_t = \mu(x_t) + \eta_t$,
where \( \eta_t \) is independent noise with zero mean and finite variance.
As in standard bandit settings, the agent’s objective is to minimize the cumulative regret over \( T \) rounds, defined as $R(T) = \sum_{t=1}^T (\mu^* - \mu(x_t))$,
where \( \mu^* = \max_{x \in X} \mu(x) \) denotes the maximum expected reward. Additionally, the \textit{optimality gap} for an arm \( x \in X \) is given by \( \Delta_x = \mu^* - \mu(x) \).

\paragraph{Zooming Dimension.}

The zooming dimension~\cite{kleinberg2019bandits,  bubeck2008online} is a fundamental concept in Lipschitz bandit problems over metric spaces. It characterizes the complexity of the problem by accounting for both the geometric structure of the arm set and the behavior of the reward function \( \mu(\cdot) \). A key component of this concept is the \textit{zooming number} \( N_z(r) \), which represents the minimal number of radius-\( r/3 \) balls required to cover the set of near-optimal arms: 
\begin{equation}
    X_r = \{x \in X : r \leq \Delta_x < 2r\},
\end{equation}
where each arm has an optimality gap between \( r \) and \( 2r \).

Building on this, the \textit{zooming dimension} \( d_z \) is introduced to quantify the growth rate of the zooming number \( N_z(r) \). Formally, it is defined as:
\[
d_z := \min \{ d \geq 0 : \exists \alpha > 0, \, N_z(r) \leq \alpha r^{-d}, \, \forall r \in (0, 1] \}.
\]
Unlike the covering dimension $d_c$~\cite{kleinberg2008multi}, which depends solely on the metric structure of the arm set \( X \), the zooming dimension \( d_z \) captures the interaction between the metric and the reward function \( \mu(\cdot) \). While the covering dimension characterizes the entire metric space, the zooming dimension focuses specifically on the near-optimal subset of \( X \), which often results in \( d_z \) being substantially smaller than \( d_c \). Notably, the zooming dimension \( d_z \) is not directly available to the agent, as it depends on the unknown reward function \( \mu(\cdot) \). This dependence makes designing algorithms based on \( d_z \) particularly challenging.

\paragraph{Covering and Packing.} 
We discuss the concepts of covering, packing, and maximal packing, which are fundamental to the algorithm proposed in Section~\ref{sec:Q-LAE}.
Let \((X, \mathcal{D})\) be a metric space with a subset \(S \subseteq X\).

\begin{definition}
 A set of points \(\{x_1, ..., x_n\} \subseteq S\) is an \textit{\(\epsilon\)-covering} of \(S\) if $ S \subseteq \bigcup_{i=1}^{n} \mathcal{B}(x_i, \epsilon)$, where \(\mathcal{B}(x, \epsilon) = \{y \in X \, | \, \mathcal{D}(x, y) \leq \epsilon\}\) denotes the closed ball of radius \(\epsilon\) centered at \(x\).    
\end{definition}

\begin{definition}
A set of points \(\{x_1, ..., x_n\} \subseteq S\) is an \textit
{\(\epsilon\)-packing} of \(S\) if $\mathcal{D}(x_i, x_j) \geq \epsilon$ for all $ i \neq j$.
\end{definition}

\begin{definition}
An \(\epsilon\)-packing \(\{x_1, ..., x_n\}\) is called a \textit{maximal \(\epsilon\)-packing} if no additional point \(x \in S\) can be added without violating the packing condition.
\end{definition}

\subsection{Quantum Computation}\label{subsec:quantum_computing}

\paragraph{Basics of Quantum States and Measurements.}

In quantum mechanics, superposition is a fundamental concept that describes how a quantum system can simultaneously exist in multiple states until it is measured or observed. This contrasts with classical systems, where a system can only exist in a single state at a time.
A \textit{quantum state} in a Hilbert space \( \mathbb{C}^n \) is represented as an \( L^2 \)-normalized column vector \( \mathbf{x} = [x_1, x_2, \ldots, x_n]^\top \), where \( |x_i|^2 \) denotes the probability of being in the \( i \)-th basis state. This superposition of states is expressed using Dirac notation as \( |\mathbf{x}\rangle \). The conjugate transpose \( \mathbf{x}^\dagger \) is written as \( \langle \mathbf{x} | \), forming the bra-ket notation.

Given two quantum states \( |\mathbf{x}\rangle \in \mathbb{C}^n \) and \( |\mathbf{y}\rangle  \in \mathbb{C}^m \), their joint quantum state is expressed by the tensor product $|\mathbf{x}\rangle |\mathbf{y}\rangle$, which can be written explicitly as \(  [x_1 y_1, \ldots, x_n y_m]^\top \in \mathbb{C}^{nm} \). Quantum mechanics does not offer full knowledge of a state vector, but only allows partial information to be accessed through \textit{quantum measurements}. These measurements are typically represented by a set of positive semi-definite Hermitian matrices \( \{E_i\}_{i \in \Lambda} \), where $\Lambda$ is the set of possible outcomes, and \( \sum_{i \in \Lambda} E_i = I \). The probability of observing the outcome \( i \) is given by \( \langle \mathbf{x} | E_i | \mathbf{x} \rangle \), which represents the inner product between \( \langle \mathbf{x} | \) and \( E_i | \mathbf{x} \rangle \). The condition \( \sum_{i \in \Lambda} E_i = I \) ensures that the total probability of all possible outcomes equals 1. Once a measurement is performed, the original quantum state collapses into the specific state associated with the observed outcome.

\paragraph{Quantum Reward Oracle and Bandit Settings.}

Quantum algorithms process quantum states using \textit{unitary operators}, which are referred to as \textit{quantum reward oracles}. These oracles encode information about the reward distribution for a given action, replacing the immediate sample reward used in classical bandit settings.

In the quantum bandit framework, at each round, the learner selects an action \( x \) and gains access to the quantum unitary oracle \( \mathcal{O}_{x} \). This oracle encodes the reward distribution \( P_x \) associated with the chosen action \( x \) and is formally defined as:
$
\mathcal{O}_x : |0\rangle \to \sum_{\omega \in \Omega_x} \sqrt{P_x(\omega)} |\omega\rangle |y^x(\omega)\rangle
$,
where $\Omega_x$ is the sample space of $P_x$ and \( y^x : \Omega_x \to \mathbb{R} \) represents the random reward corresponding to arm \( x \). This unitary operator enables quantum algorithms to interact with reward distributions in a fundamentally different way compared to classical approaches.

\paragraph{Quantum Monte Carlo (QMC) Method.}

Estimating the mean of an unknown reward distribution is a crucial task in bandit problems. In pursuit of quantum speedup, we employ the Quantum Monte Carlo method~\cite{montanaro2015quantum}, which is designed for efficient mean estimation of unknown distributions. This approach demonstrates enhanced sample efficiency compared to classical methods.

\begin{lemma}[Quantum Monte Carlo method~\cite{montanaro2015quantum}]\label{lem:qmc}
Let \( y : \Omega \to \mathbb{R} \) be a random variable with bounded variance, where \( \Omega \) is equipped with a probability measure \( P \), and the quantum unitary oracle \( \mathcal{O} \) encodes \( P \) and \( y \). With certain assumption about the noise, the QMC method offers the following guarantees:
\begin{itemize}
    \item Bounded Noise: If \( y \in [0, 1] \), there exists a constant \( C_1 > 1 \) and a quantum algorithm \( \text{QMC}_1(\mathcal{O}, \epsilon, \delta) \) that outputs an estimate \( \hat{y} \) of \( \mathbb{E}[y] \), such that 
    \(
    \Pr(|\hat{y} - \mathbb{E}[y]| \geq \epsilon) \leq \delta,
    \)
    while requiring at most \( \frac{C_1}{\epsilon} \log \frac{1}{\delta} \) queries to \( \mathcal{O} \) and \( \mathcal{O}^\dagger \).
    \item Noise with Bounded Variance: If \( \text{Var}(y) \leq \sigma^2 \), then for \( \epsilon < 4\sigma \), there exists a constant \( C_2 > 1 \) and a quantum algorithm \( \text{QMC}_2(\mathcal{O}, \epsilon, \delta) \) that outputs an estimate \( \hat{y} \) of \( \mathbb{E}[y] \), such that 
    $
    \Pr(|\hat{y} - \mathbb{E}[y]| \geq \epsilon) \leq \delta,
    $
    while requiring at most 
    \(
    \frac{C_2 \sigma}{\epsilon} \log^{3/2}_2\left(\frac{8\sigma}{\epsilon}\right) \log_2\left(\log_2\frac{8\sigma}{\epsilon}\right) \log \frac{1}{\delta}
    \)
    queries to \( \mathcal{O} \) and \( \mathcal{O}^\dagger \).
\end{itemize}
\end{lemma}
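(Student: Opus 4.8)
The plan is to reduce mean estimation to \emph{quantum amplitude estimation} and then treat the two noise regimes separately, following the strategy of \citet{montanaro2015quantum}. Recall the amplitude-estimation primitive (itself built from quantum phase estimation): given a unitary $\mathcal{A}$ with $\mathcal{A}\,|0\rangle = \sqrt{a}\,|\phi_1\rangle|1\rangle + \sqrt{1-a}\,|\phi_0\rangle|0\rangle$, running it with $k$ applications of $\mathcal{A}$ and $\mathcal{A}^\dagger$ returns $\tilde a$ with $|\tilde a - a| = O\!\bigl(\sqrt{a(1-a)}/k + 1/k^2\bigr)$ with probability at least $8/\pi^2$; taking the median of $O(\log\tfrac1\delta)$ independent runs then pushes the failure probability below $\delta$ at a multiplicative cost of $O(\log\tfrac1\delta)$ in queries. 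The remaining work is to build a suitable $\mathcal{A}$ from the given reward oracle $\mathcal{O}$ and to bound the number of resulting calls.

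For the bounded case $y \in [0,1]$, I would adjoin one ancilla qubit and apply the controlled rotation $W:\ |\omega\rangle|y(\omega)\rangle|0\rangle \mapsto |\omega\rangle|y(\omega)\rangle\bigl(\sqrt{y(\omega)}\,|1\rangle + \sqrt{1-y(\omega)}\,|0\rangle\bigr)$, so that $\mathcal{A} := W\,(\mathcal{O}\otimes I)$ prepares a state whose last qubit equals $1$ with probability exactly $a = \sum_{\omega} P(\omega)\,y(\omega) = \mathbb{E}[y]$. Amplitude estimation applied to $\mathcal{A}$ with $k = \Theta(1/\epsilon)$ then yields $|\tilde a - \mathbb{E}[y]| \le \epsilon$ with constant probability using $O(1/\epsilon)$ queries, and the median device gives the claimed $\tfrac{C_1}{\epsilon}\log\tfrac1\delta$ bound.

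For the bounded-variance case $\mathrm{Var}(y)\le\sigma^2$ the variable is unbounded, so I would first decompose it dyadically. After subtracting a crude estimate and rescaling we may assume $\sigma = 1$; write $y = y\cdot\mathbf{1}[|y|\le t_0] + \sum_{\ell\ge 1} y\cdot\mathbf{1}[t_{\ell-1}<|y|\le t_\ell]$ with $t_\ell = 2^\ell/\epsilon$, stopping once the remaining tail mass is $O(\epsilon)$. The central truncated piece is bounded and is estimated by the bounded-case algorithm after affine rescaling to $[0,1]$; each dyadic shell is also bounded, so the bounded-case algorithm estimates its contribution, but with per-shell precision and confidence budgeted so that shell $\ell$ costs a factor $\sim 2^{-\ell/2}$ less while its mass decays geometrically (using $\mathbb{E}\bigl[|y|\,\mathbf{1}[|y|>t]\bigr]\le \mathrm{Var}(y)/t$ via Cauchy--Schwarz/Chebyshev). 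Summing the $O(\log(\sigma/\epsilon))$ estimates, bounding the truncation bias by the discarded tail, and optimizing the error allocation across levels produces the per-run cost $O\!\bigl(\tfrac{\sigma}{\epsilon}\log^{3/2}(\tfrac{8\sigma}{\epsilon})\log\log(\tfrac{8\sigma}{\epsilon})\bigr)$, and the median of $O(\log\tfrac1\delta)$ runs supplies the final $\log\tfrac1\delta$ factor.

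The main obstacle is the bounded-variance analysis: the bounded case is a direct call to amplitude estimation, whereas here one must choose the dyadic cutoffs, split an error-and-confidence budget over the $\Theta(\log(\sigma/\epsilon))$ scales so that the per-level amplitude-estimation costs sum to the advertised bound, and simultaneously control both the accumulated truncation bias and the accumulated estimation error. Extracting the exact polylogarithmic exponents — the $\log^{3/2}$ and the $\log\log$ — from this bookkeeping, rather than a looser $\mathrm{polylog}$ factor, is the delicate point; I would reproduce the structure above and invoke the optimization in \citet{montanaro2015quantum} for the fine constants.
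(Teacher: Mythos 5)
The paper does not prove this lemma at all: it is imported verbatim (up to notation) from \citet{montanaro2015quantum}, with the $\log\frac{1}{\delta}$ factors coming from Montanaro's powering (median-of-runs) lemma, so there is no in-paper argument to compare yours against. Your sketch is a faithful reconstruction of the original proof: the bounded case is indeed amplitude estimation applied to $W(\mathcal{O}\otimes I)$ with the controlled rotation encoding $y(\omega)$ into an ancilla amplitude, giving error $O\bigl(\sqrt{a(1-a)}/k+1/k^2\bigr)$ per run and the median trick supplying the $\log\frac1\delta$; the bounded-variance case is Montanaro's Theorem~8, i.e.\ a crude $O(1)$-accurate centering, rescaling by $\sigma$, splitting into positive and negative parts, and a dyadic-shell decomposition whose per-level precision/confidence budget is what produces the $\log^{3/2}$ and $\log\log$ factors. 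One small imprecision: the shell boundaries should be $t_\ell = 2^{\ell}$ (after rescaling to unit variance) with the truncation occurring at $t_D \sim 1/\epsilon$, where the Cauchy--Schwarz tail bound $\mathbb{E}\bigl[|y|\,\mathbf{1}[|y|>t]\bigr]\le \mathbb{E}[y^2]/t$ makes the discarded mass $O(\epsilon)$; your $t_\ell = 2^{\ell}/\epsilon$ would place even the first cutoff past the point where no shells are needed. This is an indexing slip, not a structural flaw, and since you explicitly defer the exact polylogarithmic bookkeeping to the source, the proposal is an accurate account of why the lemma holds.
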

From Lemma~\ref{lem:qmc}, we observe a notable quadratic improvement in sample complexity with respect to $\epsilon$ when estimating \( \mathbb{E}[y] \). While classical Monte Carlo methods require \( \widetilde{O}\left(1/\epsilon^2\right) \) samples, QMC achieves the same range of confidence interval with only \( \widetilde{O}\left(1/\epsilon \right) \) samples. 
This substantial reduction in sample complexity is a fundamental advantage of quantum computing, which we would utilize in our following proposed methods.

\section{Quantum Lipschitz Adaptive Elimination Algorithm}\label{sec:Q-LAE}

The structure of the QMC algorithm requires repeatedly playing a specific arm multiple times to obtain an updated reward estimate. The algorithm continues to play the same arm until a sufficient number of samples is collected, without updating its estimate. Inspired by the nature of the QMC algorithm and recent advancements in elimination-based algorithms, we propose a novel method called Quantum Lipschitz Adaptive Elimination (Q-LAE), which adapts the batched elimination framework and leverages the power of quantum machine learning.

Unlike existing elimination-based Lipschitz bandit algorithms, our approach adopts the consistent definition of the zooming dimension introduced by \cite{kleinberg2019bandits, slivkins2019introduction}, as presented in Section~\ref{sec:prelim}. By doing so, it achieves improved regret bounds through the integration of quantum machine learning. In contrast, existing algorithms~\cite{feng2022lipschitz, kang2023robust} rely on an alternative definition of the zooming dimension, which may be larger than the classical one and can lead to looser bounds in some cases. A detailed discussion on this issue is provided in the following Remark~\ref{remark:zooming_dimension}. \textbf{To the best of our knowledge, this is the first elimination-based Lipschitz bandit algorithm that adheres to the consistent definition of the zooming dimension.}

\begin{remark}\label{remark:zooming_dimension}
In the Lipschitz bandit literature, two distinct definitions of the zooming dimension are commonly used. Specifically, \citet{kleinberg2019bandits} and \citet{slivkins2019introduction} define the zooming number as the minimal number of balls required to cover the set of near-optimal arms \( X_r = \{x \in X : r \leq \Delta_x < 2r\} \), whereas \citet{feng2022lipschitz} defines it based on a broader set \( Y_r = \{x \in X : \Delta_x \leq 2r\} \). These differing definitions can yield substantially different values.

For example, if the expected reward is constant across the entire arm space, then under our classical definition, \( X_r \) is empty for all \( r \), resulting in a zooming dimension of 0. In contrast, \( Y_r \) equals the entire space \( X \) for all \( r \), implying that the zooming dimension is equivalent to the covering dimension of the space. In this work, we propose the first elimination-based Lipschitz bandit algorithm that adopts the former (classical) definition of the zooming dimension.
\end{remark}

To achieve accurate reward estimation for each ball, our algorithm selects a single representative arm, which is the center arm of each cube, and plays it multiple times. By utilizing mean reward estimates from QMC, Q-LAE adaptively concentrates on high-reward regions in continuous spaces through a combination of \textit{selective elimination} and \textit{progressive refinement}. This approach significantly reduces exploration in less promising areas, enhancing both efficiency and overall performance.

To explain the algorithm in detail, consider the beginning of each stage \( m \), where the algorithm receives \(\mathcal{A}_m\), a maximal \(\epsilon_m\)-packing of the current active arm region \(\mathcal{C}_m\). For every point in \(\mathcal{A}_m\), the algorithm performs \( n_m  \) plays and uses the QMC algorithm to estimate the mean rewards. Based on these estimates, the algorithm applies \textit{selective elimination} to discard low-performing regions. Specifically, any point \( x \) at stage $m$ is eliminated if its estimated mean reward \(\hat{\mu}_m(x)\) is more than \( 3\epsilon_m \) lower than the best estimated reward among all points in \(\mathcal{A}_m\). The set of remaining points after this elimination step is denoted by \(\mathcal{A}_m^+\).

Next, the algorithm performs \textit{progressive refinement}, which incrementally narrows the search space. The remaining active region is refined by further subdividing areas with higher potential rewards. More precisely, the active region is updated as
\begin{equation}
    \mathcal{C}_{m+1} \leftarrow \bigcup_{x \in \mathcal{A}_m^+} \mathcal{B}(x, \epsilon_m).
\end{equation}
The updated region \(\mathcal{C}_{m+1}\) is then discretized through its maximal \(\epsilon_{m+1}\)-packing, forming the next set of active points \(\mathcal{A}_{m+1}\). This iterative process allows the algorithm to progressively concentrate exploration on more promising regions of the arm space. Notably, since a maximal \(\epsilon\)-packing is an \(\epsilon\)-covering, we can utilize maximal packing as representative points for the active region. This mathematical fact is detailed in Appendix A. The complete learning procedure is outlined in Algorithm~\ref{alg:Q-LAE}.

\begin{algorithm}[t]
\caption{Q-LAE Algorithm}\label{alg:Q-LAE}
\textbf{Input:} time horizon $T$, fail probability $\delta$ \\
\textbf{Initialization:} $\mathcal{A}_1 \leftarrow \text{maximal-} \frac12 \text{ packing of } X$, \\
 $\mathcal{C}_1 \leftarrow X$, $\epsilon_m = 2^{-m}$ for all $m$
\begin{algorithmic}[1]
\FOR{stage $m = 1,2,...$}
\STATE $n_m \leftarrow \frac{C_1}{\epsilon_m} \log \left( \frac{T}{\delta} \right) $
\FOR{each $x \in \mathcal{A}_m$}
\STATE Play $x$ for the next $n_m$ rounds and obtain $\hat{\mu}_m(x)$ by running the $\text{QMC}_1(\mathcal{O}_{x},\epsilon_m,\delta/T)$ algorithm. Terminate the algorithm if the number of rounds played exceeds $T$.
\ENDFOR
\LineComment{selective elimination}
\STATE $\hat{\mu}_{max} \leftarrow \max_{x \in \mathcal{A}_m}\hat{\mu}_m(x)$
\STATE For each $x \in \mathcal{A}_m$, eliminate $x$ if $\hat{\mu}_m(x) < \hat{\mu}_{max} - 3\epsilon_m$. Let $\mathcal{A}_m^+$ denote the set of points not eliminated.
\LineComment{progressive refinement}
\STATE $\mathcal{C}_{m+1} \leftarrow \bigcup_{x \in \mathcal{A}_m^+} \mathcal{B}(x, \epsilon_m)$
\STATE Find a maximal $\epsilon_{m+1}$-packing of $\mathcal{C}_{m+1}$ and define it as $\mathcal{A}_{m+1}$.
\ENDFOR
\end{algorithmic}
\end{algorithm}

\subsection{Regret Analysis}\label{subsec:Q-LAE_regret}

Here, we analyze the regret of the Q-LAE algorithm. In the bandit literature, a \textit{clean event} refers to the scenario where the reward estimates for all active arms and stages are bounded within their confidence intervals with high probability. While the classical (non-quantum) setting typically uses the Chernoff bound to establish deviation inequality between the true reward and its estimate, our quantum setting relies on the QMC algorithm.

At the end of line 4 in Algorithm~\ref{alg:Q-LAE}, Lemma~\ref{lem:qmc} ensures that the QMC algorithm provides an estimate \(\hat{\mu}_m(x)\) that satisfies $| \hat{\mu}_m(x) - \mu(x) | \leq \epsilon_m$ with probability at least \(1 - {\delta}/{T}\).
Since every arm \(x \in \mathcal{A}_m\) is played at least once in each stage \(m\), applying the union bound over all stages and arms guarantees that
$
    |\hat{\mu}_m(x) - \mu(x) | \leq \epsilon_m
$
holds for all stages $m$ and \(x \in \mathcal{A}_m\) with probability at least \(1 - \delta\). We define this as the clean event for this section and assume it holds throughout the subsequent regret analysis.

We now present a theorem that provides an upper bound on the regret of the Q-LAE algorithm, with the detailed proof provided in Appendix A.1.

\begin{restatable}{theorem}{thmblin}
\label{thm:blin}
Under the bounded noise assumption, the cumulative regret \( R(T) \) of the Q-LAE Algorithm is bounded with high probability, at least \( 1 - \delta \), as follows:
\[
R(T) = O \left( T^{\frac{d_z}{d_z+1}}  \,(\log T)^{\frac{2}{d_z+1}} \right),
\]
where \( d_z \) represents the zooming dimension of the problem instance.
\end{restatable}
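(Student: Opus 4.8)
The plan is to follow the standard "clean event" regret decomposition used for Zooming-type algorithms, but carefully adapted to the batched/quantum structure of Q-LAE. First I would establish, on the clean event, that the algorithm never eliminates a globally optimal arm's region: if $x^*$ attains $\mu^*$, then the representative arm in $\mathcal{A}_m$ closest to $x^*$ has estimated reward within $\epsilon_m$ of its true reward (clean event) and within $\epsilon_m$ of $\mu^*$ (Lipschitzness plus the covering property of the maximal $\epsilon_m$-packing), so it survives the cut at threshold $3\epsilon_m$. Hence $\mathcal{C}_{m+1}$ always contains a ball around $x^*$, and the process is well-defined. Next I would show the key per-stage accuracy bound: every surviving arm $x \in \mathcal{A}_m^+$ has $\Delta_x = O(\epsilon_m)$. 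This follows because $\hat\mu_{\max} \geq \mu^* - 2\epsilon_m$ (best estimate is at least the optimal-region representative's estimate), and a surviving $x$ has $\hat\mu_m(x) \geq \hat\mu_{\max} - 3\epsilon_m \geq \mu^* - 5\epsilon_m$, so by the clean event $\mu(x) \geq \mu^* - 6\epsilon_m$, i.e. $\Delta_x \leq 6\epsilon_m$. Consequently, since $\mathcal{A}_{m+1}$ is a packing of $\mathcal{C}_{m+1} = \bigcup_{x\in\mathcal{A}_m^+}\mathcal{B}(x,\epsilon_m)$ and Lipschitzness extends gaps by at most $\epsilon_m$, every arm ever played at stage $m$ has optimality gap $O(\epsilon_m)$.

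The second main ingredient is bounding $|\mathcal{A}_m|$, the number of representative arms at stage $m$, using the zooming dimension. Here I would argue that the points of $\mathcal{A}_m$ form an $\epsilon_{m-1}$-packing (they are drawn from $\mathcal{C}_m$, a union of $\epsilon_{m-1}$-balls around surviving arms, each of which has gap $O(\epsilon_{m-1})$), and that all of $\mathcal{A}_m$ lies in a neighborhood of the near-optimal set $X_r$ for $r \asymp \epsilon_{m-1}$. A covering/packing counting argument — each of the $N_z(r) = O(r^{-d_z})$ radius-$r/3$ balls covering $X_r$ can contain only boundedly many points of an $\Omega(r)$-separated set, using the doubling property — then gives $|\mathcal{A}_m| = O(\epsilon_m^{-d_z})$ (up to the doubling constant). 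This is the step I expect to be the main obstacle: one has to be careful that the maximal-packing points really do sit close enough to $X_r$ (they could a priori sit in a thin annulus just outside), that the "union of balls" region does not blow up the packing count beyond what $N_z$ controls, and that the reindexing between $\epsilon_m$ and $\epsilon_{m-1}$ and the constants from the doubling dimension are handled cleanly; this is exactly where the paper's "consistent definition of the zooming dimension" (Remark~\ref{remark:zooming_dimension}) matters, and getting the covering argument to go through with $X_r$ rather than $Y_r$ is the delicate point.

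Finally I would assemble the regret bound. At stage $m$, each of the $|\mathcal{A}_m| = O(\epsilon_m^{-d_z})$ arms is played $n_m = \frac{C_1}{\epsilon_m}\log(T/\delta)$ times, each contributing regret $O(\epsilon_m)$, so the stage-$m$ regret is $O(\epsilon_m^{-d_z} \cdot \epsilon_m^{-1} \cdot \epsilon_m \cdot \log(T/\delta)) = O(\epsilon_m^{-d_z}\log(T/\delta))$, and the number of rounds consumed by stage $m$ is $\Theta(\epsilon_m^{-(d_z+1)}\log(T/\delta))$. Let $M$ be the last completed stage; summing the round counts and setting the total to $T$ gives $\epsilon_M^{-(d_z+1)} = \Theta\!\left(T/\log(T/\delta)\right)$, i.e. $\epsilon_M = \Theta\!\left((\log(T/\delta)/T)^{1/(d_z+1)}\right)$. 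The total regret is dominated (geometrically) by the last stage, $R(T) = O(\epsilon_M^{-d_z}\log(T/\delta)) = O\!\left(T^{d_z/(d_z+1)} (\log(T/\delta))^{1/(d_z+1)} \cdot \log(T/\delta)\right) = O\!\left(T^{d_z/(d_z+1)}(\log T)^{2/(d_z+1)}\right)$ after absorbing $\delta$ and noting the extra $\log$ factor from the crude geometric sum; I would also add the $O(1)$ contribution from the at-most-one partially-executed terminal stage and the trivial regret bound when $d_z$-dependent constants are small. A brief remark would confirm this all holds on the clean event, which has probability $\geq 1-\delta$ by the union bound established before the theorem.
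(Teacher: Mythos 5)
There is a genuine gap at the step you yourself flagged as the main obstacle, and it is more serious than the "thin annulus just outside $X_r$" worry you raised: the claim $|\mathcal{A}_m| = O(\epsilon_m^{-d_z})$ is false under the paper's (classical) definition of the zooming dimension. The zooming number $N_z(r)$ only counts balls needed to cover the annulus $X_r = \{x : r \leq \Delta_x < 2r\}$; it says nothing about arms whose gap is much smaller than $\epsilon_{m-1}$, including exact optimizers. Those arms all survive elimination and all appear in $\mathcal{A}_m$, but they live in $X_{r'}$ for $r' \ll \epsilon_m$ (or in $\{\Delta_x = 0\}$), and the classical $d_z$ places no useful bound on how many $\epsilon_m$-separated points such sets contain. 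The constant-reward example from Remark~4.1 is a clean counterexample: there $d_z = 0$, yet $\mathcal{A}_m$ is a maximal $\epsilon_m$-packing of all of $X$, so $|\mathcal{A}_m| = \Theta(\epsilon_m^{-d_c})$. Consequently both your stage-regret bound $O(\epsilon_m^{-d_z}\log(T/\delta))$ and, more importantly, your round-count $T_m = \Theta(\epsilon_m^{-(d_z+1)}\log(T/\delta))$ — which you use to pin down $\epsilon_M$ and hence the final bound — do not follow. (What you are implicitly bounding is the packing number of $Y_r = \{\Delta_x \leq 2r\}$, i.e.\ the \emph{alternative} zooming dimension the paper explicitly declines to use.)

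The paper's proof avoids this by never bounding $|\mathcal{A}_m|$ wholesale. It fixes a threshold $\alpha$, charges all active arms with $\Delta_x \leq \alpha$ a total regret of at most $\alpha T_m$ (whatever $T_m$ and however many such arms there are), and applies the zooming count only scale-by-scale to $Z_{i,m} = X_{2^{-i}} \cap \mathcal{A}_m$ for $2^{-i} > \alpha$, where the $\epsilon_m$-separation of $\mathcal{A}_m$ together with Lemma~A.2 ($\Delta_x \leq 7\epsilon_{m-1}$) gives $|Z_{i,m}| \leq N_z(r)$. Optimizing $\alpha$ per stage yields $R_m \lesssim C_z^{1/(d_z+1)} T_m^{d_z/(d_z+1)}(\log T)^{1/(d_z+1)}$, and the second $\log$ factor then comes from Jensen's inequality over the first $B = \Theta(\log T)$ stages (the factor $B^{1/(d_z+1)}$) combined with the tail bound $7\cdot 2^{-B}T$ for stages beyond $B$ — not from a geometric sum over stage indices as in your accounting. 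Your first two ingredients (survival of the optimal region and $\Delta_x = O(\epsilon_m)$ for surviving arms) are essentially the paper's Lemma~A.2 and are fine; the proof needs to be restructured around the threshold decomposition to repair the counting step.
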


Theorem~\ref{thm:blin} establishes that our Q-LAE algorithm achieves a regret bound of $\tilde O\left(T^{d_z/(d_z+1)}\right)$, significantly improving upon the optimal regret bound of $\tilde O\left(T^{(d_z+1)/(d_z+2)}\right)$ attained by classical algorithms. Since the zooming dimension \(d_z\) is a small nonnegative number, often substantially smaller than the covering dimension \(d_c\), this represents a major improvement. A more detailed discussion of the zooming dimension can be found in Appendix D.

Additionally, we provide a theorem that analyzes the regret of the Q-LAE algorithm under the assumption of bounded noise variance. In this setting, with a different choice of $n_m$, the algorithm's regret bound of \(\tilde{O}\left(T^{{d_z}/{(d_z + 1)}}\right)\) matches the result for the bounded noise setting in Theorem~\ref{thm:blin} up to logarithmic factors. More details and the analysis of Theorem~\ref{thm:blin_bounded_variance} are deferred to Appendix A.2.

\begin{restatable}{theorem}{thmblinvariance}
\label{thm:blin_bounded_variance}
With the choice of $n_m = \frac{C_2 \sigma}{\epsilon_m} \log^{3/2}_2\left(\frac{8\sigma}{\epsilon_m}\right) \log_2\left(\log_2\frac{8\sigma}{\epsilon_m}\right) \log\left( \frac{T}{\delta}\right)$ in line 2, under the bounded variance noise assumption, the cumulative regret $R(T)$ of the Q-LAE Algorithm is bounded with high probability, at least \( 1 - \delta \), as follows:
\[
R(T) = O \left( T^{\frac{d_z}{d_z+1}}  (\log T)^{\frac72 \frac{1}{d_z+1}} (\log\log T)^{\frac{1}{d_z+1}} \right),
\]
where \( d_z \) is the zooming dimension of the problem instance.
\end{restatable}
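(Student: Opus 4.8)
The plan is to run the same regret decomposition as in Theorem~\ref{thm:blin}, but replace the query-count input $n_m$ with the bounded-variance QMC budget given in the statement, tracking carefully how the extra $\log$ and $\log\log$ factors propagate. First I would re-establish the clean event: by the second bullet of Lemma~\ref{lem:qmc} (with $\epsilon = \epsilon_m$, $\delta_{\text{QMC}} = \delta/T$, noting $\epsilon_m < 4\sigma$ for all $m$ beyond a fixed constant number of stages, which contribute only $O(1)$ to regret), each call to $\text{QMC}_2(\mathcal{O}_x, \epsilon_m, \delta/T)$ returns $\hat\mu_m(x)$ with $|\hat\mu_m(x) - \mu(x)| \le \epsilon_m$ with probability at least $1 - \delta/T$, and uses at most $n_m = \frac{C_2\sigma}{\epsilon_m}\log_2^{3/2}(8\sigma/\epsilon_m)\log_2\log_2(8\sigma/\epsilon_m)\log(T/\delta)$ queries. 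A union bound over all stages and all arms in each $\mathcal{A}_m$ gives the clean event with probability $\ge 1-\delta$, exactly as in Section~\ref{subsec:Q-LAE_regret}. On the clean event, the structural facts from the proof of Theorem~\ref{thm:blin} — that every surviving arm at stage $m$ has $\Delta_x = O(\epsilon_m)$, that $\mathcal{C}_{m+1}$ is contained in a neighborhood of near-optimal arms, and that $|\mathcal{A}_m| = O(N_z(\epsilon_{m-1})) = O(\epsilon_m^{-d_z})$ by the zooming-dimension bound and the covering/packing relation in Appendix~A — carry over verbatim, since the elimination rule and the refinement step are unchanged; only the per-stage play count differs.

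Next I would bound the per-stage regret. In stage $m$ each of the $|\mathcal{A}_m| = O(\epsilon_m^{-d_z})$ active arms is played $n_m$ times, each incurring instantaneous regret $O(\epsilon_m)$, so the stage-$m$ regret is
\[
O\!\left( \epsilon_m^{-d_z} \cdot n_m \cdot \epsilon_m \right) = O\!\left( \epsilon_m^{-d_z} \cdot \sigma\, \log_2^{3/2}\!\tfrac{8\sigma}{\epsilon_m} \log_2\log_2\!\tfrac{8\sigma}{\epsilon_m} \log\tfrac{T}{\delta} \right),
\]
where the $\epsilon_m/\epsilon_m$ from the regret-per-pull times the $1/\epsilon_m$ inside $n_m$ cancels, leaving $\epsilon_m^{-d_z}$. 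Summing over $m = 1, \dots, M$ with $\epsilon_m = 2^{-m}$, the geometric-type sum is dominated by its last term, giving total regret up to stage $M$ of order $\epsilon_M^{-d_z}\, \sigma\, M^{3/2}\,(\log M)\,\log(T/\delta)$ (since $\log_2(8\sigma/\epsilon_m) = O(m)$ at the top stage). Then I would pick the cutoff stage $M$ to balance this accumulated cost against the ``tail'' regret $O(T \epsilon_M)$ from never refining past stage $M$: this is the same trade-off as in Theorem~\ref{thm:blin}, but with the accumulated cost inflated by the $M^{3/2}\log M$ factor. Solving $\epsilon_M^{-d_z} \cdot (\text{polylog}) = T\epsilon_M$ yields $\epsilon_M \asymp T^{-1/(d_z+1)}$ up to logarithmic corrections, hence $M \asymp \tfrac{1}{d_z+1}\log T$, so $M^{3/2}\log M = O((\log T)^{3/2}\log\log T)$ and $\log(T/\delta) = O(\log T)$; combining, $R(T) = O\!\left( T^{d_z/(d_z+1)} (\log T)^{(3/2 + 1)/(d_z+1)} (\log\log T)^{1/(d_z+1)}\right) = O\!\left(T^{d_z/(d_z+1)}(\log T)^{\frac72\frac{1}{d_z+1}}(\log\log T)^{\frac{1}{d_z+1}}\right)$, matching the claimed bound.

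The main obstacle I expect is making the balancing step rigorous when the ``polylog'' prefactor itself depends on $M$: one cannot simply solve the balance equation treating the prefactor as constant, so I would instead guess $M = \lceil \tfrac{1}{d_z+1}\log_2 T \rceil$ directly, substitute, and verify both the accumulated cost and the tail cost are each $O(T^{d_z/(d_z+1)} \cdot \text{polylog})$ with the stated polylog exponents — the self-consistency check rather than an exact solve. A secondary technical point is handling the early stages where $\epsilon_m \ge 4\sigma$ so the bounded-variance QMC guarantee does not apply; there are at most $\lceil \log_2(1/(4\sigma)) \rceil = O(1)$ such stages, and in each the active arm count and play count are $O(1)$ with respect to $T$ (they depend only on the fixed metric space and on $\log(T/\delta)$), contributing at most $O(\log(T/\delta)) = O(\log T)$ to the regret, which is absorbed. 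Apart from these bookkeeping issues, the argument is a direct re-run of the Theorem~\ref{thm:blin} proof with the new $n_m$, so I would present it by stating which lemmas from Appendix~A.1 transfer unchanged and then redoing only the final summation and balancing.
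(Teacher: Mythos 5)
There is a genuine gap in the middle of your argument: the cardinality bound $|\mathcal{A}_m| = O(N_z(\epsilon_{m-1})) = O(\epsilon_m^{-d_z})$ is not valid under the paper's (classical) definition of the zooming dimension, and the per-stage regret bound you build on it does not follow. The zooming number $N_z(r)$ only controls the covering number of the annulus $X_r = \{x : r \le \Delta_x < 2r\}$, whereas on the clean event $\mathcal{A}_m$ is a maximal $\epsilon_m$-packing of a region contained in $\{x : \Delta_x \le 7\epsilon_{m-1}\}$ --- a set of the form $Y_r$ from Remark~\ref{remark:zooming_dimension}, whose covering number can be as large as that of the whole space (e.g., when $\mu$ is nearly constant, $X_r$ is empty and $d_z=0$ while $|\mathcal{A}_m| \asymp \epsilon_m^{-d_c}$). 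Controlling $|\mathcal{A}_m|$ by $\epsilon_m^{-d_z}$ is exactly the step that is only legitimate under the alternative (Feng et al.) definition of the zooming dimension, which the paper explicitly avoids. The paper's proof of Theorem~\ref{thm:blin} (and hence of Theorem~\ref{thm:blin_bounded_variance}) instead decomposes $\mathcal{A}_m$ into the dyadic annuli $Z_{i,m} = X_{2^{-i}} \cap \mathcal{A}_m$, bounds each $|Z_{i,m}| \le C_z r^{-d_z}$ using the packing separation $\mathcal{D}(x,y) \ge \epsilon_m \ge \Delta_x/14 \ge r/14$, and disposes of the (possibly very many) arms with $\Delta_x \le \alpha$ by the trivial time-budget bound $\alpha T_m$; the threshold $\alpha$ is then optimized per stage. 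This annulus-plus-threshold decomposition is the essential content of the proof, and your proposal bypasses it with an invalid cardinality claim.

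Two secondary issues. First, your arithmetic at the end is inconsistent: $M^{3/2}\log M \cdot \log(T/\delta)$ raised to the $1/(d_z+1)$ power gives exponent $(3/2+1)/(d_z+1) = \tfrac{5}{2}\tfrac{1}{d_z+1}$ on $\log T$, not $\tfrac{7}{2}\tfrac{1}{d_z+1}$ as you assert; the paper's extra unit in the exponent comes from the Jensen step $\sum_{m=1}^{B} T_m^{d_z/(d_z+1)} \le B^{1/(d_z+1)} T^{d_z/(d_z+1)}$, which appears because the per-stage bounds are optimized in $\alpha$ separately and then summed, and which your ``last term dominates'' summation does not reproduce. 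Second, that domination claim itself fails at $d_z = 0$ (the sum $\sum_{m\le M}\epsilon_m^{-d_z}$ is then $M$, not $O(1)$ times its last term), a case the paper treats as important. Your handling of the clean event, the $\epsilon_m < 4\sigma$ caveat for early stages, and the overall plan of re-running Theorem~\ref{thm:blin} with the new $n_m$ are all fine; the fix is to keep the paper's $Z_{i,m}$ decomposition and per-stage $\alpha$-optimization and only swap in the new query count, which is precisely what the paper's Appendix~A.2 does.
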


\section{Quantum Zooming Algorithm}\label{sec:q-zooming}

Next, we introduce our second methodology, the Quantum Zooming (Q-Zooming) algorithm. This algorithm is a novel adaptation of the classical Zooming algorithm~\cite{kleinberg2019bandits}, which efficiently focuses on promising regions of the arm space by adaptively discretization. Our Q-Zooming algorithm achieves an improved regret bound compared to our first algorithm, Q-LAE. While inspired by its classical counterpart, our algorithm is not a straightforward extension. Unlike the classical version, the Q-Zooming algorithm operates in stages, and follows two primary rules: the \textit{activation rule} and the \textit{selection rule}. In this section, we describe these two rules and highlight the novel modifications introduced in our Q-Zooming algorithm.

\begin{algorithm}[t]
\caption{Q-Zooming Algorithm}\label{alg:q-zooming}
\textbf{Input:} time horizon $T$, fail probability $\delta$ \\
\textbf{Initialization:} active arms set $S \leftarrow \emptyset$, confidence radius $\epsilon_0(\cdot)=1$
\begin{algorithmic}[1]
\FOR{stage $s = 1,2,...,m$}
\LineComment{activation rule}
\IF{there exists an arm $y$ that is not covered by the confidence balls of active arms}
\STATE add $y$ to the active set: $S \leftarrow S \cup \{ y \}$
\ENDIF
\LineComment{selection rule}
\STATE $x_s \leftarrow \argmax_{x \in S} \hat{\mu}_{s-1}(x)+ 2 \epsilon_{s-1}(x) $
\STATE $\epsilon_s(x) \leftarrow \begin{cases}
    \epsilon_{s-1}(x) /2  \quad \text{if } x = x_s, \\
    \epsilon_{s-1}(x)  \quad \text{if } x \neq x_s.
\end{cases}$
\STATE $N_{s} \leftarrow \frac{C_1}{\epsilon_s(x_s)} \log \left( \frac{m}{\delta} \right) $
\STATE If $\sum_{k=1}^{s} N_s > T$, terminate the algorithm.
\STATE Play $x_s$ for the next $N_s$ rounds and obtain $\hat{\mu}_s(x_s)$ by running the $\text{QMC}_1(\mathcal{O}_{x_s},\epsilon_s(x_s),\delta/m)$ algorithm.\\
\quad \textit{Note: For all other arms \( x \neq x_s \), retain the reward estimates \( \hat{\mu}_{s-1}(x) \) from stage \( s-1 \).}
\ENDFOR
\end{algorithmic}
\end{algorithm}

The \textit{activation rule} ensures that any arm not covered by the confidence balls of the active arms is added to the active set. Formally, an arm \( y \) is considered covered at stage \( s \) if there exists $x \in S$ such that $\mathcal{D}(x, y) \leq \epsilon_{s-1}(x)$, where \( S \) is the current set of active arms, \( \mathcal{D}(x, y) \) is the distance between arms \( x \) and \( y \), and \( \epsilon_{s-1}(x) \) is the confidence radius for arm \( x \) updated at stage \( s-1 \). If such an uncovered arm \( y \) exists, it is added to the active set $S$.

The \textit{selection rule} determines which arm to sample next from the active set. Specifically, it selects the arm \( x \in S \) with the highest index, defined as $\hat{\mu}_{s-1}(x)+ 2 \epsilon_{s-1}(x)$, where \( \hat{\mu}_{s-1}(x) \) denotes the estimated mean reward of arm \( x \) and \( \epsilon_{s-1}(x) \) is its corresponding confidence radius.

Our Q-Zooming algorithm combines these activation and selection rules with several novel techniques to adapt to the quantum computing setting. Unlike the classical approach, where a single sample is played once per round, our algorithm leverages QMC for reward estimation. This fundamental difference requires multiple queries to the quantum oracle of a selected arm to accurately estimate its mean reward. To handle this, we partition the time horizon \(T\) into multiple stages. At each stage, the algorithm selects the arm with the highest index and allocates multiple queries to estimate its reward with high precision. Additionally, as described in line 8 of Algorithm~\ref{alg:q-zooming}, the confidence radius shrinks by half at each stage only when the corresponding arm is selected. This guarantees that the total number of plays for any given arm remains bounded, a property that is crucial for the analysis of our algorithm. We also show that the optimality gap of any active arm is bounded by its confidence radius. Since this confidence radius shrinks only when the arm is selected, it implies that, in later stages, only arms with rewards close to that of the optimal arm are repeatedly selected. This property is fundamental to the algorithm's effectiveness.

The procedure of the Q-Zooming algorithm is detailed in Algorithm~\ref{alg:q-zooming}. It begins with an empty set of active arms \(S\) and an initial confidence radius of 1 for all arms. At each stage \(s\), the algorithm activates an arm that is not covered by the confidence balls and selects the best arm \(x_s\) based on the activation and selection rules described earlier. The confidence radius \(\epsilon_s(x_s)\) for the selected arm \(x_s\) is then updated by halving its value. Using this updated \(\epsilon_s(x_s)\), the algorithm determines the number of queries \(N_s\) required for the quantum oracle. The algorithm then plays the selected arm \(x_s\) for \(N_s\) rounds and updates the reward estimate \(\hat{\mu}_s(x_s)\) using the QMC algorithm. This step provides a more accurate reward estimate, bounding the estimation error such that \(|\hat{\mu}_s(x_s) - \mu_s(x_s)| \leq \epsilon_s(x_s)\) with probability at least \(1 - \delta/m\), as established in Lemma~\ref{lem:qmc}. Once these steps are completed, the algorithm proceeds to the next stage unless the total number of rounds exceeds the time horizon \(T\), in which case the algorithm terminates.

\subsection{Regret Analysis}\label{subsec:Q-zooming_regret}

We provide a regret analysis for the proposed Q-Zooming algorithm. Similar to the approach described in Section~\ref{subsec:Q-LAE_regret}, we utilize the clean event approach to structure the analysis.

Following each stage, Lemma~\ref{lem:qmc} guarantees that QMC, with a sufficient number of queries \( N_s \), provides an estimate \(\hat{\mu}_s(x_s)\) satisfying \(|\hat{\mu}_s(x_s) - \mu(x_s)| \leq \epsilon_s(x_s)\) with probability at least \(1 - \frac{\delta}{m}\), where \( m \) is the total number of stages. By applying the union bound over all stages $1 \leq s \leq m$, it follows that \(|\hat{\mu}_s(x_s) - \mu(x_s)| \leq \epsilon_s(x_s)\) holds for all \( 1 \leq s \leq m \) with probability at least \(1 - \delta\).

Additionally, this result extends to all arms $x$. Specifically, $|\hat{\mu}_s(x) - \mu(x)| \leq \epsilon_s(x)$ holds for all \( 1 \leq s \leq m \) and every arm \( x \), with probability at least \(1 - \delta\). This is because $\epsilon_s(x)$ is initially set to 1 for arms that have never been played, and both \(\epsilon_s(x)\) and \(\hat{\mu}_s(x)\) remain unchanged for arms not selected during stage s. We define this condition as a clean event and assume it holds throughout the subsequent analysis. We now establish the following high-probability upper bound on the regret for the Q-Zooming algorithm. The proof is deferred to Appendix B.1.

\begin{restatable}{theorem}{thmzooming}
\label{thm:q-zooming}
Under the bounded noise assumption, the cumulative regret \( R(T) \) of the Q-Zooming Algorithm is bounded with high probability, at least \( 1 - \delta \), as follows:
\[
R(T) = O \left( T^{\frac{d_z}{d_z+1}} \, ( \log T)^{\frac{1}{d_z+1}} \right),
\]
where \( d_z \) is the zooming dimension of the problem instance.
\end{restatable}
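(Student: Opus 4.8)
The plan is to mirror the classical Zooming analysis, adapted to the stage-based quantum algorithm, working throughout on the clean event established above (so that $|\hat\mu_s(x)-\mu(x)|\le\epsilon_s(x)$ for every stage $s$ and arm $x$, with probability at least $1-\delta$). Write $n_x$ for the total number of rounds arm $x$ is played, $m$ for the number of stages, and $\epsilon(x)$ for $x$'s final confidence radius. The first step is a correctness lemma: for the arm $x_s$ picked by the selection rule at stage $s$, $\Delta_{x_s}\le 3\,\epsilon_{s-1}(x_s)$. The observation that makes this go through despite the ``activate only one arm per stage'' rule is that, since the diameter of $X$ is at most $1$ and a freshly activated arm carries confidence radius $\epsilon_0\equiv 1$, after the activation step of any stage the entire space is covered by the active confidence balls; in particular an optimal arm $x^*$ is covered by some active $x^\circ$ with $\mathcal D(x^\circ,x^*)\le\epsilon_{s-1}(x^\circ)$. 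Then the index of $x^\circ$ satisfies $\hat\mu_{s-1}(x^\circ)+2\epsilon_{s-1}(x^\circ)\ge\mu(x^\circ)+\epsilon_{s-1}(x^\circ)\ge\mu^*-\mathcal D(x^\circ,x^*)+\epsilon_{s-1}(x^\circ)\ge\mu^*$, so the index of $x_s$ is at least $\mu^*$, and expanding $\hat\mu_{s-1}(x_s)\le\mu(x_s)+\epsilon_{s-1}(x_s)$ gives $\Delta_{x_s}\le 3\epsilon_{s-1}(x_s)$. Since a confidence radius only changes (halving) at a stage where its arm is selected, this upgrades to $\Delta_x\le 6\,\epsilon(x)$ for every active arm $x$.

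Next I would bound the number of plays. Because $x$'s radius halves each time it is selected, if $x$ finishes with $\epsilon(x)=2^{-k}$ then its total plays are $\sum_{\ell=1}^{k}C_1\,2^{\ell}\log(m/\delta)\le \frac{2C_1}{\epsilon(x)}\log(m/\delta)$. Combined with $\Delta_x\le 6\epsilon(x)$, the regret charged to all plays of any single arm is $n_x\Delta_x=O(\log(m/\delta))$, so each stage contributes only $O(\log(m/\delta))$ regret. Moreover every stage plays at least one round, so $m\le T$ and $\log(m/\delta)=O(\log(T/\delta))$, which is $O(\log T)$ for fixed $\delta$.

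The crux is a packing bound relating the active set to the zooming number. Fix a scale $i\ge 1$ and let $B_i=\{x\in X:\ x\ \text{active},\ 2^{-i}\le\Delta_x<2^{-i+1}\}\subseteq X_{2^{-i}}$. By the correctness lemma each $x\in B_i$ has $\epsilon(x)\ge\Delta_x/6\ge 2^{-i}/6$; and by the activation rule, if $y\in B_i$ is activated after $x\in B_i$, then at $y$'s activation $x$ is active with radius at least $\epsilon(x)$, so $\mathcal D(x,y)>\epsilon(x)\ge 2^{-i}/6$. Hence $B_i$ is a $(2^{-i}/6)$-packing of $X_{2^{-i}}$. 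By definition $X_{2^{-i}}$ is covered by $N_z(2^{-i})$ balls of radius $2^{-i}/3$, and by the doubling property each such ball holds at most a constant number of points of a $(2^{-i}/6)$-packing; therefore $|B_i|=O(N_z(2^{-i}))=O(\alpha\,2^{i d_z})$.

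Finally I would decompose the regret by scale and balance. For a cutoff $i^*$, rounds spent on arms with $\Delta_x<2^{-i^*}$ contribute at most $2^{-i^*}T$ in total; for $i\le i^*$, scale $i$ contributes $O(\log(m/\delta)\,|B_i|)=O(\log(m/\delta)\,2^{i d_z})$, and summing this geometric series yields $O(\log(m/\delta)\,2^{i^* d_z})$ (when $d_z=0$ the sum is $O(\log(m/\delta)\log T)$, still within the target). Thus $R(T)=O(2^{-i^*}T+\log(m/\delta)\,2^{i^* d_z})$, and choosing $2^{i^*}$ of order $(T/\log(m/\delta))^{1/(d_z+1)}$ equalizes the two terms; with $\log(m/\delta)=O(\log T)$ this gives $R(T)=O(T^{d_z/(d_z+1)}(\log T)^{1/(d_z+1)})$. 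The main obstacle is the packing bound of the third step: one must carefully use that the confidence radii of different active arms shrink at different stages so that near-optimal active arms stay mutually separated at the scale governing $N_z$, and then absorb the constant-factor gap between the packing scale $2^{-i}/6$ and the covering radius $2^{-i}/3$ in the definition of the zooming number using the doubling assumption; the correctness lemma, the play count, and the geometric balancing are then routine.
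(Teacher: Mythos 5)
Your proposal is correct and follows essentially the same route as the paper's proof: the same clean event, the same key lemma $\Delta_{x_s}\le 3\,\epsilon_{s-1}(x_s)$ (the paper's Lemma~B.3), the same geometric-sum bound on the total plays per arm, the same packing argument bounding the number of active near-optimal arms at scale $r$ by $O(N_z(r))$, and the same scale decomposition with the balancing choice $\alpha \sim (\log T/T)^{1/(d_z+1)}$. The only differences are cosmetic --- the paper gets an $r/3$ separation by invoking the lemma at activation time rather than your $r/6$ separation plus a doubling correction, and your aside that the $d_z=0$ case is ``still within the target'' is slightly off (the sum over scales gives $(\log T)^2$ there, not $\log T$), though the paper's own final inequality glosses over the same corner case.
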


Theorem~\ref{thm:q-zooming} shows that the Q-Zooming algorithm attains a regret bound of \(\tilde{O}\left(T^{d_z/(d_z+1)}\right)\), surpassing the optimal regret bound of classical Lipschitz bandit algorithms, \(\tilde{O}\left(T^{(d_z+1)/(d_z+2)}\right)\). This enhancement is similar to that achieved by the Q-LAE algorithm in Section~\ref{sec:Q-LAE}. As noted in Section~\ref{sec:Q-LAE} and Appendix D, since \(d_z\) is often much smaller than \(d_c\), this improvement is particularly significant.

\begin{remark}\textit{
The regret bounds for the Q-Zooming and Q-LAE algorithms are identical, ignoring polylogarithmic factors. However, when considering the \(\log T\) term, Q-Zooming exhibits a tighter bound. The key difference lies in their strategies: Q-LAE focuses on completely eliminating low-reward regions, while Q-Zooming, without eliminating arms, adopts a strategy of further discretizing high-reward regions to explore them more effectively. Despite their theoretical regret bounds, the practical performance of these two algorithms can vary depending on how quickly Q-LAE eliminates low-reward regions, as we illustrate in Section~\ref{sec:experiments}.
}
\end{remark}

Under the bounded variance assumption, using a different choice of $N_s$, the regret bound remains consistent with Theorem~\ref{thm:q-zooming}, differing only by logarithmic factors. A detailed analysis for the bounded variance case, including the proof of Theorem~\ref{thm:zooming_bounded_variance}, is provided in Appendix B.2.

\begin{restatable}{theorem}{thmzoomingvariance}
\label{thm:zooming_bounded_variance}
With the choice of 
$N_{s} = \frac{C_2 \sigma}{\epsilon_s(x_s)} \log^{3/2}_2\left(\frac{8\sigma}{\epsilon_s(x_s)}\right) \log_2\left(\log_2\frac{8\sigma}{\epsilon_s(x_s)}\right) \log\left( \frac{m}{\delta}\right)$ in line 9,
the cumulative regret \( R(T) \) of the Q-Zooming Algorithm under the bounded variance noise assumption is bounded with high probability, at least \( 1 - \delta \), as follows:
\[
R(T) = O \left( T^{\frac{d_z}{d_z+1}}  (\log T)^{\frac52 \frac{1}{d_z+1}} (\log\log T)^{\frac{1}{d_z+1}} \right),
\]
where \( d_z \) is the zooming dimension of the problem instance.
\end{restatable}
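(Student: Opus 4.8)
The plan is to mirror the proof of Theorem~\ref{thm:q-zooming} (the bounded-noise Q-Zooming bound) and track how the new choice of $N_s$ propagates through the regret decomposition. The structural part of the argument --- the clean event, the fact that every active arm $x$ satisfies $\Delta_x \le c\,\epsilon_s(x)$ for some small constant $c$, the activation/covering argument that controls how many arms can be active at each confidence scale, and the counting over confidence scales weighted by the zooming number $N_z(r)$ --- is entirely unchanged, since none of it depends on the internal cost of QMC, only on the guarantee $|\hat\mu_s(x)-\mu(x)|\le\epsilon_s(x)$, which Lemma~\ref{lem:qmc} still delivers (now via $\mathrm{QMC}_2$ with failure probability $\delta/m$). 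So I would first recall that, on the clean event, the regret contributed by arms selected while their confidence radius lies in the dyadic band $[\,r,2r\,)$ is $O\!\big(r \cdot N_z(r) \cdot (\text{plays per arm at that scale})\big)$, and that the number of plays of a fixed arm while its confidence radius equals $\epsilon$ is exactly $N_s$ evaluated at $\epsilon_s(x_s)=\epsilon$.

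The one genuinely new ingredient is the bookkeeping of the time budget. In the bounded-noise case, reaching confidence radius $\epsilon$ on an arm costs $\sum_{j} \frac{C_1}{2^{-j}}\log(m/\delta) = O(\epsilon^{-1}\log(m/\delta))$ total queries (geometric sum dominated by the last term). Under bounded variance the per-stage cost is inflated by the factor $L(\epsilon):=\log_2^{3/2}(8\sigma/\epsilon)\,\log_2\log_2(8\sigma/\epsilon)$, which is polylogarithmic in $1/\epsilon$ and --- crucially --- monotone, so the geometric sum is still dominated (up to constants) by its last term: reaching radius $\epsilon$ costs $O\!\big(\sigma\,\epsilon^{-1} L(\epsilon)\log(m/\delta)\big)$ queries. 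Next I would set up the standard balancing argument: let $r_{\min}$ be the finest confidence scale reached within the horizon; summing the per-scale query costs over dyadic scales $r \ge r_{\min}$ gives a total budget constraint of the form $\sum_{r \ge r_{\min}} N_z(r)\,\sigma\, r^{-1} L(r)\log(m/\delta) \lesssim T$, while the regret is $\sum_{r \ge r_{\min}} N_z(r)\,r \cdot \big(\text{plays}\big) \lesssim \sum_{r\ge r_{\min}} N_z(r)\,\sigma\,L(r)\log(m/\delta)$ plus the $O(r_{\min} T)$ term from arms never refined past $r_{\min}$. Using $N_z(r)\le\alpha r^{-d_z}$ and the integral/sum estimates $\sum_{r\ge r_{\min}} r^{-d_z-1} \asymp r_{\min}^{-d_z-1}$ and $\sum_{r\ge r_{\min}} r^{-d_z}\asymp r_{\min}^{-d_z}$, the budget constraint becomes roughly $r_{\min}^{-(d_z+1)} \sigma L(r_{\min})\log(m/\delta) \asymp T$, i.e. $r_{\min} \asymp \big(\sigma L(r_{\min})\log(m/\delta)/T\big)^{1/(d_z+1)}$, and plugging back yields $R(T) = O\!\big(r_{\min} T\big) = O\!\big(T^{d_z/(d_z+1)}(\sigma L(r_{\min})\log(m/\delta))^{1/(d_z+1)}\big)$.

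It remains to convert the implicit $r_{\min}$ and the stage count $m$ into explicit $\log T$ factors. Since $m = O(\log T)$ (each stage halves some confidence radius and radii cannot go below $1/T$, so there are $O(\log T)$ distinct scales times $O(\mathrm{poly}\log)$ arms --- more precisely $m$ is polynomially related to $T$, so $\log m = O(\log T)$), we have $\log(m/\delta) = O(\log(T/\delta))$. And $r_{\min} \ge 1/T$ up to polylog, so $L(r_{\min}) = \log_2^{3/2}(8\sigma/r_{\min})\log_2\log_2(8\sigma/r_{\min}) = O\big((\log T)^{3/2}\log\log T\big)$. Substituting, $\sigma L(r_{\min})\log(m/\delta) = O\big((\log T)^{5/2}\log\log T\big)$, and raising to the power $1/(d_z+1)$ gives exactly the claimed $R(T) = O\big(T^{d_z/(d_z+1)}(\log T)^{\frac52\frac{1}{d_z+1}}(\log\log T)^{\frac{1}{d_z+1}}\big)$. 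I would also note the side condition $\epsilon_s(x_s) < 4\sigma$ required by Lemma~\ref{lem:qmc}'s bounded-variance branch; since $\epsilon_0 = 1$ and $\sigma$ is a fixed constant, this holds for all but the first $O(1)$ stages per arm, contributing only an additive constant to the regret that is absorbed into the $O(\cdot)$.

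The main obstacle I anticipate is the self-referential dependence of $r_{\min}$ on $L(r_{\min})$ in the budget equation: one has to argue that the polylogarithmic factor $L$ does not materially disturb the leading-order balance. This is handled by the standard bootstrap --- first get a crude polynomial bound $r_{\min} = \Omega(T^{-1})$ from the trivial budget, substitute to conclude $L(r_{\min}) = O(\mathrm{polylog}\,T)$, then re-solve the balance treating $L$ as a known polylog constant. The only other point requiring care is making sure the geometric-sum-dominated-by-last-term step genuinely goes through with the $L(\epsilon)$ weights; this follows because $\epsilon \mapsto L(\epsilon)$ grows only polylogarithmically while $\epsilon^{-1}$ grows geometrically along the dyadic sequence, so the ratio of consecutive terms still tends to a constant $> 1$ and the sum is $\Theta$ of its largest term. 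Everything else is a line-for-line transcription of the bounded-noise proof with $\frac{C_1}{\epsilon}\log(m/\delta)$ replaced by the bounded-variance query count.
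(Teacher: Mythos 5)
Your proposal is correct and follows essentially the same route as the paper's proof: transcribe the bounded-noise argument, verify that the monotone polylogarithmic factor $L(\epsilon)=\log_2^{3/2}(8\sigma/\epsilon)\log_2\log_2(8\sigma/\epsilon)$ still leaves the cumulative play count dominated by its last term (the paper's bound $M_x(T)\le 2N_{s(x)}$), bound $L$ by $O\big((\log T)^{3/2}\log\log T\big)$ via $\epsilon_{s(x)}(x)=2^{-k(x)}$ with $k(x)\le\log_2 T$, and re-optimize the threshold to get the extra $(\log T)^{\frac52\frac{1}{d_z+1}}(\log\log T)^{\frac{1}{d_z+1}}$ factor. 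The only cosmetic difference is that you phrase the final balancing as a query-budget equation determining $r_{\min}$, whereas the paper optimizes a free gap threshold $\alpha$ in $R(T)\le\alpha T+\sum_{r>\alpha}R_i(T)$; these yield the identical choice and the same bound.
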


\begin{figure*}[t]
\centering
\begin{subfigure}{0.28\textwidth}
\includegraphics[trim={0 3mm 1.4cm 1.3cm},clip,width=\textwidth]{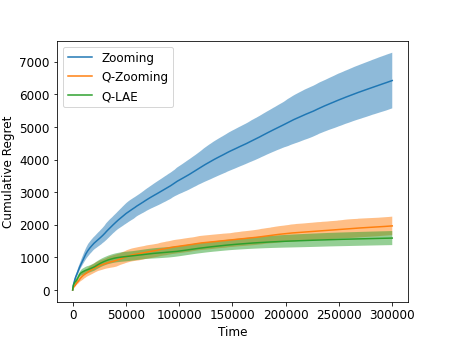}
\caption{Triangle (Bernoulli)}
\end{subfigure}
\begin{subfigure}{0.28\textwidth}
\includegraphics[trim={0.7cm 3mm 1.3cm 1.3cm},clip,width=0.96\textwidth]{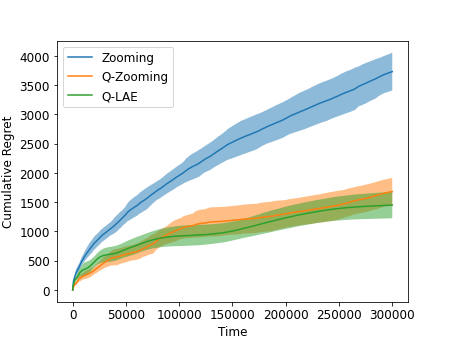}
\caption{Sine (Bernoulli)}
\end{subfigure}
\begin{subfigure}{0.28\textwidth}
\includegraphics[trim={0.45cm 3mm 1.4cm 1.3cm},clip, width=0.97\textwidth]{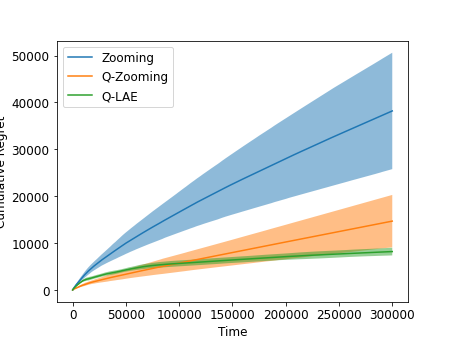}
\caption{Two-Dimensional (Bernoulli)}
\end{subfigure}
\begin{subfigure}{0.28\textwidth}
\includegraphics[trim={0 3mm 1.4cm 1.3cm},clip,width=\textwidth]{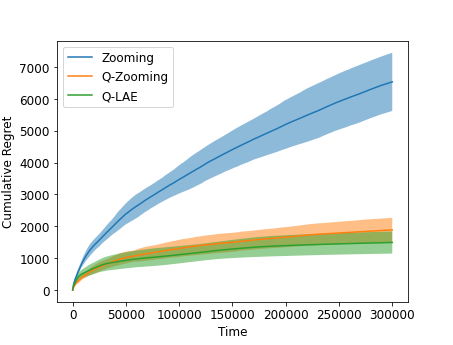}
\caption{Triangle (Gaussian)}
\end{subfigure}
\begin{subfigure}{0.28\textwidth}
\includegraphics[trim={0.7cm 3mm 1.3cm 1.3cm},clip,width=0.96\textwidth]{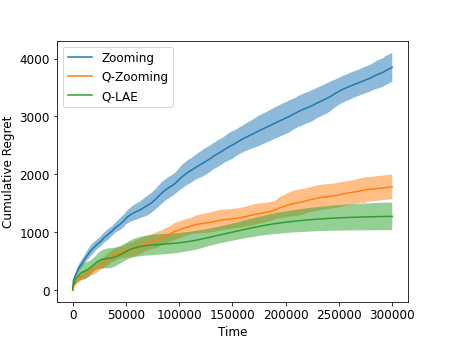}
\caption{Sine (Gaussian)}
\end{subfigure}
\begin{subfigure}{0.28\textwidth}
\includegraphics[trim={0.45cm 3mm 1.4cm 1.3cm},clip, width=0.97\textwidth]{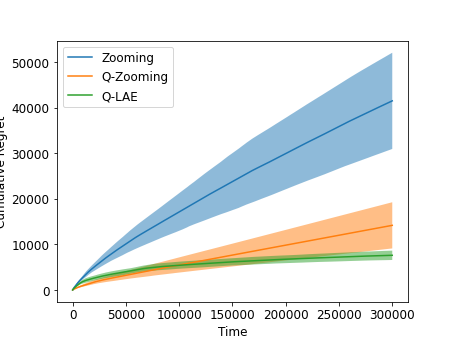}
\caption{Two-Dimensional (Gaussian)}
\end{subfigure}
\caption{
Average Cumulative Regret. The regret performance for each function and noise model is evaluated as the average cumulative regret over 30 independent runs, with the corresponding standard deviations also reported.
}
\label{fig:experiment}
\end{figure*}

\section{Experiments}\label{sec:experiments}

We evaluate the regret performance of our proposed quantum algorithms, Q-LAE and Q-Zooming, against the classical Zooming algorithm~\cite{kleinberg2019bandits} on Lipschitz bandit problems. Specifically, we conduct experiments on three Lipschitz functions, following the common settings in studies on Lipschitz bandit problems~\cite{kang2023robust, feng2022lipschitz}: (1) \(\mu(x) = 0.9 - 0.95|x - 1/3|\) with \((X, \mathcal{D}) = ([0, 1], |\cdot|)\) (triangle), (2)  \(\mu(x) = 0.35 \sin\left(3\pi x / 2\right)\) with \((X, \mathcal{D}) = ([0, 1], |\cdot|)\) (sine), and (3) \(\mu(x) = 1.2 - 0.95\|x - (0.8, 0.7)\|_2 - 0.3\|x - (0, 1)\|_2\) with \((X, \mathcal{D}) = ([0, 1]^2, \|\cdot\|_\infty)\) (two-dimensional). In all three cases, the reward function \(\mu(x)\) is bounded within the interval \([0, 1]\). We also consider two types of noise, as described in Lemma~\ref{lem:qmc} and aligned with our theoretical analysis: (a) bounded noise, where the output \(y\) is modeled as a Bernoulli random variable with \(\mu(x)\) as the probability of success (\(y = 1\)), and (b) noise with bounded variance, where zero-mean Gaussian noise with variance \(\sigma^2 \) is added directly to \(\mu(x)\) as the observed reward.

The QMC algorithm and our proposed quantum algorithms, Q-LAE and Q-Zooming, are implemented using the Python package Qiskit~\cite{qiskit2024}. In our experiments, we set the time horizon to \(T = 300,000\) and the failure probability to \(\delta = 0.05\). Gaussian noise is sampled from a normal distribution \(N(0, \sigma^2 = 0.1)\). We evaluate performance by averaging the cumulative regret over 30 independent trials. Both the mean and standard deviation of the cumulative regret are reported in Figure~\ref{fig:experiment}.

Figure~\ref{fig:experiment} shows that both Q-LAE and Q-Zooming consistently outperform the classical Zooming algorithm across all scenarios. This provides strong empirical support for the effectiveness of our proposed quantum approaches. The advantage holds under both types of noise considered, highlighting the benefits of integrating quantum techniques into the Lipschitz bandit framework.

A noteworthy observation is that Q-LAE often achieves comparable or slightly better performance when compared to Q-Zooming. Although Q-Zooming possesses a theoretically superior regret bound due to its favorable polylogarithmic factor, Q-LAE's elimination-based structure offers distinct practical advantages. This practical benefit allows Q-LAE to marginally outperform Q-Zooming over time. Intuitively, Q-LAE achieves this by progressively eliminating low-reward regions, enabling it to become more focused and improve its performance in later stages. However, in the early stages, with most arms still active, Q-LAE explores broadly and may include suboptimal regions, potentially making it slightly less efficient than Q-Zooming initially. Nonetheless, as learning progresses, Q-LAE rapidly converges, ultimately delivering superior overall performance.

\section{Discussion}

In this work, we introduced the first two quantum Lipschitz bandit algorithms, named Q-LAE and Q-Zooming, under the non-linear reward functions and arbitrary continuous arm metric space. We provided a detailed theoretical analysis to illustrate that both of our efficient algorithms can achieve an improved regret bound of order $\tilde O(T^{d_z/(d_z+1)})$ when the noise is bounded or has finite variance. The superiority of our proposed methods over state-of-the-art approaches is validated under comprehensive experiments. 

\paragraph{Comparison of Q-LAE and Q-Zooming: }

While Q-Zooming achieves a better regret bound when considering logarithmic terms, its empirical performance does not consistently outperform that of Q-LAE. As demonstrated in Section 6, Q-LAE can surpass Q-Zooming in certain scenarios due to its elimination-based strategy, which efficiently prunes low-reward regions and concentrates exploration on more promising areas. This targeted approach often leads to faster convergence, especially when the reward function contains large suboptimal regions. Moreover, Q-LAE is the first elimination-based Lipschitz bandit algorithm to adopt the consistent definition of the zooming dimension (see Remark 4.1), leading to a novel and distinctive regret analysis. We believe this offers a valuable contribution to the Lipschitz bandit literature. It is also worth noting that zooming-based algorithms, both classical and quantum, face scalability challenges in high-dimensional settings, whereas Q-LAE does not suffer from this limitation. Therefore, we view Q-LAE as a complementary alternative to Q-Zooming, offering unique strengths in both theoretical formulation and practical effectiveness.

\paragraph{Limitations:} A limitation of our work is the absence of a theoretical lower bound, leaving the optimality of our achieved regret bound uncertain. However, as lower bounds are also unresolved for simpler cases like quantum multi-armed bandit, this would remain a challenging future work.

\section*{Acknowledgments}

This work was supported in part by the National Science Foundation under grants DMS-2152289 and DMS-2134107. We would also like to express sincere gratitude to Dr. Zhongxiang Dai for his invaluable guidance throughout this work.

\bibliography{aaai2026}

\newpage
\appendix
\onecolumn

\section{Analysis of Q-LAE}\label{app:q-lae}

In our Q-LAE algorithm, we construct the maximal packing of the active region \(\mathcal{C}_m\) and use these points to play and estimate rewards. This approach is justified because the selected maximal packing also functions as a covering, ensuring that these selected points effectively represent the entire remaining region. We formally introduce and prove this mathematical fact as a proposition.

\vspace{1mm}

\begin{proposition}[Maximal $\epsilon$-packing implies $\epsilon$-covering]
Let \((X, \mathcal{D})\) be a metric space with a subset \(S \subseteq X\). If \(\{ x_1, x_2, ..., x_n \} \subseteq S\) is a maximal \(\epsilon\)-packing, then:
\[
S \subseteq \bigcup_{i=1}^n \mathcal{B}(x_i, \epsilon),
\]
where \(\mathcal{B}(x_i, \epsilon) \) denotes the closed ball of radius \(\epsilon\) centered at \(x_i\).    
\end{proposition}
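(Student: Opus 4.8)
The plan is to prove the contrapositive-style statement directly: take an arbitrary point $x \in S$ and show it lies in some ball $\mathcal{B}(x_i, \epsilon)$. The key observation is that the maximality of the $\epsilon$-packing $\{x_1, \dots, x_n\}$ means we cannot add $x$ to the packing without violating the packing condition.

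First I would fix an arbitrary $x \in S$ and split into two cases. If $x = x_i$ for some $i$, then trivially $x \in \mathcal{B}(x_i, \epsilon)$ since $\mathcal{D}(x_i, x_i) = 0 \le \epsilon$, and we are done. Otherwise, suppose $x \notin \{x_1, \dots, x_n\}$. Then consider the augmented set $\{x_1, \dots, x_n, x\} \subseteq S$. By maximality of the $\epsilon$-packing, this augmented set cannot be an $\epsilon$-packing. Since the original set $\{x_1, \dots, x_n\}$ already satisfies all pairwise distance constraints $\mathcal{D}(x_i, x_j) \ge \epsilon$ for $i \ne j$, the only way the packing condition can fail for the augmented set is that $\mathcal{D}(x, x_i) < \epsilon$ for at least one index $i$. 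This immediately gives $x \in \mathcal{B}(x_i, \epsilon)$ (recalling the closed ball is $\{y : \mathcal{D}(x_i, y) \le \epsilon\}$, and $< \epsilon$ certainly implies $\le \epsilon$), hence $x \in \bigcup_{i=1}^n \mathcal{B}(x_i, \epsilon)$. Since $x$ was arbitrary, this shows $S \subseteq \bigcup_{i=1}^n \mathcal{B}(x_i, \epsilon)$.

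There is essentially no serious obstacle here; the argument is a short and standard fact from metric space theory. The only point requiring mild care is the degenerate case where $x$ coincides with one of the $x_i$, which must be handled separately since then the "augmented set" is not genuinely larger — this is why I split into the two cases above. One should also note the edge case $n = 0$: if $S$ is nonempty then the empty set cannot be a maximal $\epsilon$-packing (any single point of $S$ could be added), so $n \ge 1$ whenever $S \ne \emptyset$, and when $S = \emptyset$ the inclusion holds vacuously. Beyond that, the proof is a direct unwinding of the definitions of maximal packing and covering.
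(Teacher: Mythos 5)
Your proof is correct and is essentially the same argument as the paper's: both rest on the observation that a point of $S$ at distance more than $\epsilon$ from every $x_i$ could be added to the packing, contradicting maximality. The paper phrases this as a proof by contradiction while you argue the contrapositive directly (with the harmless extra case split for $x \in \{x_1,\dots,x_n\}$), but the underlying idea is identical.
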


\begin{proof}
Suppose there exists a point \(x \in S\) such that \(x \notin \bigcup_{i=1}^n \mathcal{B}(x_i, \epsilon) \), implying $x$ does not belong to \( \mathcal{B}(x_i, \epsilon) \) for any \(i =1,2, \dots, n\). This implies that
\[
\mathcal{D}(x,x_i) > \epsilon \quad \text{for every } i =1,2, \dots, n.
\]
Hence, \(x\) is at least a distance \(\epsilon\) away from all the centers \(x_1, x_2, \dots, x_n\).

This indicates that \(x\) can be added to the packing set \(\{x_1, x_2, \dots, x_n\}\) without violating the packing condition, contradicting the maximality of the \(\epsilon\)-packing.
Therefore, no such \(x\) exists, and we conclude that
\[
S \subseteq \bigcup_{i=1}^n \mathcal{B}(x_i, \epsilon).
\]
\end{proof}

\subsection{Analysis of Theorem~\ref{thm:blin}}\label{app:blin}

In this section, we provide a comprehensive analysis of Theorem~\ref{thm:blin}. 
Before proceeding, we define the \textit{zooming constant} \(C_z\), a concept closely associated with the zooming dimension. It is defined as the smallest multiplier \(\alpha > 0\) that satisfies the following inequality for the zooming dimension \(d_z\):
\[
C_z = \min \{\alpha > 0 : N_z(r) \leq \alpha r^{-d_z}, \, \forall r \in (0, 1] \}.
\]
The zooming constant \(C_z\) will be used throughout the analysis in Appendices~\ref{app:q-lae} and \ref{app:b}.
We now begin with the definition of a clean event.

\begin{definition}
A clean event for the Q-LAE algorithm is defined as the condition where, for every stage \(m\) and all \(x \in \mathcal{A}_m\), the following inequality holds:
\[
|\hat{\mu}_m(x) - \mu(x)| \leq \epsilon_m.
\]
\end{definition}

\vspace{1mm}
\begin{remark}\textit{
    As described in Section~\ref{subsec:Q-LAE_regret}, the probability of a clean event can be established as at least \(1 - \delta\) by leveraging the QMC algorithm.}
\end{remark}

\vspace{2mm}
We present a lemma which, under the assumption that the clean event holds, bounds the optimality gap of each active arm by the confidence radius from the previous stage. This ensures that the remaining arm regions correspond to high-reward regions. Additionally, in the proof of this lemma, we demonstrate that the optimal arm is never eliminated during the Q-LAE algorithm when the clean event holds.

\begin{lemma}\label{lem:blin}
    Under the clean event, for all $m$ and any arm $x \in \mathcal{A}_m$, $\Delta_x$ is bounded by
    $$\Delta_x \leq 7 \epsilon_{m-1}.$$
\end{lemma}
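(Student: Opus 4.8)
The plan is to argue by induction on the stage index $m$, tracking two claims simultaneously: (i) the optimal arm $x^*$ (or more precisely, a near-optimal representative of it) is never eliminated, so it survives into $\mathcal{A}_m$, and (ii) every surviving arm $x \in \mathcal{A}_m$ has $\Delta_x \le 7\epsilon_{m-1}$. For the base case $m=1$, we have $\epsilon_0 = 2^{-0} = 1$ and $\mathcal{A}_1$ is a maximal $\tfrac12$-packing of $X$; since the diameter of $X$ is at most $1$ and $\mu$ is $1$-Lipschitz, $\Delta_x \le 1 \le 7\epsilon_0$ for every arm, so the bound is immediate.

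For the inductive step, assume the claim holds at stage $m$ and consider stage $m+1$. First I would show the optimal arm is not eliminated: since $\mathcal{A}_m$ is a maximal $\epsilon_m$-packing of $\mathcal{C}_m$, it is an $\epsilon_m$-covering (by the Proposition above), so there is some $x^\dagger \in \mathcal{A}_m$ with $\mathcal{D}(x^\dagger, x^*) \le \epsilon_m$, hence $\Delta_{x^\dagger} \le \epsilon_m$ by the Lipschitz property. Under the clean event, $\hat\mu_m(x^\dagger) \ge \mu(x^\dagger) - \epsilon_m \ge \mu^* - 2\epsilon_m$, and for any arm $x$, $\hat\mu_m(x) \le \mu(x) + \epsilon_m \le \mu^* + \epsilon_m$, so $\hat\mu_{\max} \le \mu^* + \epsilon_m$. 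Therefore $\hat\mu_m(x^\dagger) \ge \hat\mu_{\max} - 3\epsilon_m$, meaning $x^\dagger$ passes the elimination test and $x^\dagger \in \mathcal{A}_m^+$ — so $\mathcal{C}_{m+1}$ is nonempty and $\mathcal{C}_{m+1}$ covers a neighborhood of $x^*$, which also takes care of the analogous survival claim for $\mathcal{A}_{m+1}$.

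Next I would bound $\Delta_x$ for $x \in \mathcal{A}_{m+1}$. By construction, such an $x$ lies in $\mathcal{C}_{m+1} = \bigcup_{x' \in \mathcal{A}_m^+} \mathcal{B}(x', \epsilon_m)$, so there is some surviving $x' \in \mathcal{A}_m^+$ with $\mathcal{D}(x,x') \le \epsilon_m$, giving $\Delta_x \le \Delta_{x'} + \epsilon_m$ by Lipschitzness. It remains to bound $\Delta_{x'}$ for $x' \in \mathcal{A}_m^+$. Since $x'$ was not eliminated, $\hat\mu_m(x') \ge \hat\mu_{\max} - 3\epsilon_m \ge \hat\mu_m(x^\dagger) - 3\epsilon_m$, and using the clean-event bounds $\hat\mu_m(x') \le \mu(x') + \epsilon_m$ and $\hat\mu_m(x^\dagger) \ge \mu(x^\dagger) - \epsilon_m \ge \mu^* - 2\epsilon_m$, we get $\mu(x') \ge \mu^* - 6\epsilon_m$, i.e.\ $\Delta_{x'} \le 6\epsilon_m$. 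Combining, $\Delta_x \le 6\epsilon_m + \epsilon_m = 7\epsilon_m = 7\epsilon_{(m+1)-1}$, which closes the induction.

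The main obstacle is bookkeeping rather than depth: one must be careful that the representative $x^\dagger$ used to lower-bound $\hat\mu_{\max}$ genuinely lies in $\mathcal{A}_m$ at \emph{every} stage (this is where maximal-packing-implies-covering is essential, applied to $\mathcal{C}_m$, and where one needs $x^* \in \mathcal{C}_m$ to hold inductively), and that the constant $7$ propagates correctly — note the slack is not tight, since each step only adds $\epsilon_m$ while the bound is stated in terms of $\epsilon_{m-1} = 2\epsilon_m$, leaving room to absorb the recursion. A minor subtlety is handling the terminal stage where the algorithm may stop mid-loop due to exceeding $T$; the lemma is vacuous for arms in stages never reached, so this causes no issue.
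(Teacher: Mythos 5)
Your proposal is correct and follows essentially the same route as the paper's proof: both first show that a packing point within $\epsilon_m$ of $x^*$ survives elimination (so $x^*$ remains in every $\mathcal{C}_m$), then bound $\Delta_x$ for active arms by chaining the clean-event error, the Lipschitz step to a surviving representative from the previous stage, and the $3\epsilon$ elimination threshold, with the same $7 = 3+4$ accounting. The only differences are cosmetic — you make the induction and the base case $m=1$ explicit, and you bound $\Delta_{x'}$ for $x'\in\mathcal{A}_m^+$ as an intermediate step rather than going directly from $\mathcal{A}_m$ back to $\mathcal{A}_{m-1}^+$.
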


\begin{proof}

We first show that the optimal arm $x^* = \argmax \mu(x)$ is not eliminated after stage $m$. More precisely, we prove that $x^* \in \mathcal{C}_{m+1}$. 

At the beginning of stage $m$, there exists $x^{**} \in \mathcal{A}_m$ such that $x^* \in \mathcal{B}(x^{**}, \epsilon_m)$. For any arm $x \in \mathcal{A}_m$, we have
\begin{align*}
      \hat{\mu}_m(x) - \hat{\mu}_m(x^{**}) & \leq \mu(x)  + \epsilon_m - \mu(x^{**})  + \epsilon_m \\
     & \leq \mu(x)  + \epsilon_m  -  \mu(x^*)+ \epsilon_m  + \epsilon_m\\
     & =  \mu(x) - \mu(x^*)+ 3\epsilon_m \\
     & \leq  3\epsilon_m,
\end{align*}
where the first inequality follows from the definition of the clean event, and the second inequality holds from the Lipschitzness of $\mu$.
Thus, we have $\hat{\mu}_m(x^{**}) \geq \hat{\mu}_m(x) - 3\epsilon_m$ for all $x \in \mathcal{A}_m$. This ensures that $x^{**}$ is not eliminated, meaning $x^{**} \in \mathcal{A}_m^+$. Since $x^* \in \mathcal{B}(x^{**}, \epsilon_m)$, it follows that $x^* \in \mathcal{C}_{m+1}$.

We now proceed with the main proof, using the fact that the optimal arm \(x^*\) is never removed from the active region during the algorithm.
For any arm $x \in \mathcal{A}_m$, we know that $x \in \mathcal{C}_m$. Thus, there exists some $x_0 \in \mathcal{A}_{m-1}^+ \subseteq \mathcal{A}_{m-1}$ such that $x \in \mathcal{B}(x_0, \epsilon_{m-1})$. Similarly, since $x^* \in \mathcal{C}_{m}$, there exists some $x^{**} \in \mathcal{A}_{m-1}^+$ such that $x^* \in \mathcal{B}(x^{**}, \epsilon_{m-1})$.

Therefore, for any arm $x \in \mathcal{A}_m$, it holds that
\begin{align*}
    \Delta_x = \mu^* - \mu(x) & \leq \mu(x^{**}) + \epsilon_{m-1} - \mu(x_0) + \epsilon_{m-1} \\
    & \leq \hat{\mu}_{m-1}(x^{**}) + \epsilon_{m-1}+ \epsilon_{m-1} - \hat{\mu}_{m-1}(x_0) + \epsilon_{m-1}+ \epsilon_{m-1} \\
    & \leq \max_{x \in \mathcal{A}_{m-1}} \hat{\mu}_{m-1}(x) - \hat{\mu}_{m-1}(x_0)+ 4\epsilon_{m-1} \\
    & \leq 3\epsilon_{m-1} + 4\epsilon_{m-1} = 7\epsilon_{m-1}.
\end{align*}

The first and second inequalities hold due to the Lipschitzness of $\mu$ and the definition of the clean event, respectively. The final inequality follows from the fact that \(x_0 \in \mathcal{A}_{m-1}^+\), which ensures that it was not eliminated by the elimination rule. This completes the proof.

\end{proof}

\vspace{1mm}
We now present the proof of Theorem~\ref{thm:blin}.

\thmblin*
\begin{proof}


For \( i \in \mathbb{N} \), let \( r = 2^{-i} \) and define \( Y_i := X_r\).
Next, consider the set of arms that are active at stage $m$ and also belong to $Y_i$.
\[
Z_{i,m} := Y_i \cap \mathcal{A}_m.
\]

For any two distinct arms \( x, y \in Z_{i,m} \), the distance between them is lower bounded by
\[
D(x, y) \geq \epsilon_m = \frac{1}{2} \epsilon_{m-1} \geq \frac{1}{14} \Delta_x \geq \frac{1}{14} r.
\]
This inequality holds because \(\mathcal{A}_m\) forms an \(\epsilon_m\)-packing and by Lemma~\ref{lem:blin}. 
As a result, when the set \( Y_i \) is covered by subsets with diameter \( r/14 \), arms \( x \) and \( y \) cannot be included in the same subset.
Consequently, we have:
\[
|Z_{i,m}| \leq N_z(r) \leq C_z r^{-d_z},
\]
where \( d_z \) is the zooming dimension and \( C_z \) is the corresponding constant.


Now, consider the regret contribution from arms in \( Z_{i,m} \):
\begin{align*}
    R_{i,m} = \sum_{x \in Z_{i,m}} n_m \Delta_x & \leq \sum_{x \in Z_{i,m}} \frac{C_1}{\epsilon_m} \cdot O(\log T) \cdot 2r \\
    & = |Z_{i,m}| \cdot \frac{C_1}{\epsilon_m} \cdot O(\log T) \cdot 2r \\
    & \leq C_z r^{-d_z} \cdot \frac{C_1}{\epsilon_m} \cdot O(\log T) \cdot 2r.
\end{align*}
Choose \( \alpha > 0 \) and analyze the arms based on whether \( \Delta_x \leq \alpha \) or \( \Delta_x > \alpha \). Let \( T_m \) denote the number of rounds in stage \( m \). Then, the regret at stage \( m \) can be bounded as:
\begin{align*}
    R_m &\leq \alpha \cdot T_m + \sum_{i: r = 2^{-i} > \alpha} R_{i,m} \\
& \leq  \alpha \cdot T_m + \sum_{i: r = 2^{-i} > \alpha} \frac{r}{\epsilon_m} \Theta(\log T) \cdot  C_z r^{-d_z} \\
& \leq  \alpha \cdot T_m + \sum_{i: r = 2^{-i} > \alpha} \Theta(\log T) \cdot  C_z r^{-d_z} \\ 
& \leq \alpha \cdot T_m + O(C_z \log T) \cdot \frac{1}{\alpha^{d_z}}
\end{align*}
Now, choose $\alpha = \left( \frac{C_z \log T}{T_m} \right)^{\frac{1}{d_z+1}}$. Substituting this value yields
$$R_m \leq C_z^{\frac{1}{d_z+1}} \cdot T_m^{\frac{d_z}{d_z+1}}  (\log T)^{\frac{1}{d_z+1}}.$$
Since $f(x) = x^{\frac{d_z}{d_z+1}}$ is a concave function on $[0,\infty)$, we can apply Jensen's inequality to obtain
$$T_1^{\frac{d_z}{d_z+1}} + ... + T_B^{\frac{d_z}{d_z+1}} \leq  B^{\frac{1}{d_z+1}} \cdot T^{\frac{d_z}{d_z+1}},$$
for any $B$.
Then summing the regret across all stages $1 \leq m \leq B$, we get
\begin{align*}
\sum_{m=1}^B R_m  & \leq C_z^{\frac{1}{d_z+1}} \sum_{m=1}^B T_m^{\frac{d_z}{d_z+1}}  (\log T)^{\frac{1}{d_z+1}} \\
& \leq C_z^{\frac{1}{d_z+1}} B^{\frac{1}{d_z+1}}  \cdot T^{\frac{d_z}{d_z+1}}  (\log T)^{\frac{1}{d_z+1}}. 
\end{align*}
Therefore, we can bound the total regret as
$$R(T)  \leq C_z^{\frac{1}{d_z+1}} B^{\frac{1}{d_z+1}}  \cdot T^{\frac{d_z}{d_z+1}}  (\log T)^{\frac{1}{d_z+1}} + \sum_{m > B} R_m$$
Using Lemma~\ref{lem:blin}, we further obtain
$$ \leq C_z^{\frac{1}{d_z+1}} B^{\frac{1}{d_z+1}}  \cdot T^{\frac{d_z}{d_z+1}}  (\log T)^{\frac{1}{d_z+1}} + 7 \cdot 2^{-B} \cdot T$$
and simplifying this yields
$$ = O \left( T^{\frac{d_z}{d_z+1}}  (\log T)^{\frac{2}{d_z+1}} \right), $$
by choosing $B= \frac{\log\frac{T}{\log T}}{d_z +1 }$ to balance the two terms optimally.
\end{proof}

\vspace{2mm}
\begin{remark}
In this proof, we use a slightly different definition of the zooming number compared to the one defined in Section~\ref{sec:prelim}. The zooming number \(N_z(r)\) was defined as the minimal number of radius-\(r/3\) balls required to cover the set of near-optimal arms \(X_r = \{x \in X : r \leq \Delta_x < 2r\}\). However, in the proof of this theorem, we utilize the zooming number as the minimal number of radius-\(r/14\) balls required to cover the set \(X_r\). 

It is worth noting that using different values for the radius is a common practice. For instance, \citet{slivkins2019introduction} uses \(r/3\), \citet{kleinberg2019bandits, kang2023robust} use \(r/16\), and \citet{feng2022lipschitz} uses \(r/2\). 
Here, we explain that using different radius values in the definition of the zooming number does not affect the equivalence of the zooming dimension.

Let \(N_1(r)\) denote the number of radius-\(r\) balls required to cover \(X_r\), and let \(N_2(r)\) denote the number of radius-\(r/c\) balls required to cover \(X_r\). Suppose the zooming dimensions computed using \(N_1(r)\) and \(N_2(r)\) are \(d_1\) and \(d_2\), respectively.
By Assouad’s embedding theorem~\cite{assouad1983plongements}, any compact doubling metric space \((X, \mathcal{D})\) can be embedded into a Euclidean space with some metric. 
Therefore, without loss of generality, we can restrict our analysis to a Euclidean space \( \mathbb{R}^k\) equipped with any metric. For such a Euclidean space \( \mathbb{R}^k\), we can apply Lemma 5.7 from \cite{wainwright_2019}, which establishes the relationship between \(N_1(r)\) and \(N_2(r)\) as:
\begin{equation}\label{eq:app_zooming_number}
    N_1(r) \leq N_2(r) \leq (1 + 2c)^k N_1(r).
\end{equation}

By the definition of $d_1$, there exists some \(\alpha_1 > 0\) such that \(N_1(r) \leq \alpha_1 r^{-d_1}\). Therefore, we have 
\[
N_2(r) \leq (1 + 2c)^k N_1(r) \leq \alpha_1 (1 + 2c)^k r^{-d_1}
\]
by Equation \eqref{eq:app_zooming_number}.
Since \(\alpha_1\), \(c\), and \(k\) are constants independent of \(r\), and based on the definition of the zooming dimension, we conclude that \(d_2 \leq d_1\). 

Conversely, using the fact that \(N_1(r) \leq N_2(r)\), we also deduce that \(d_1 \leq d_2\). Therefore, we conclude that \(d_1 = d_2\), meaning the zooming dimension remains the same regardless of the radius value used in the definition of the zooming number.

\end{remark}

\vspace{2mm}

\subsection{Analysis of Theorem~\ref{thm:blin_bounded_variance}}\label{app:blin_bounded_variance}

When the reward contains noise with bounded variance, the Q-LAE algorithm substitutes $\text{QMC}_1$ with $\text{QMC}_2$ as described in Lemma~\ref{lem:qmc}. Additionally, the number of rounds each arm is played during the stage is adjusted to match the number of queries required by $\text{QMC}_2$. Please refer to Algorithm~\ref{alg:Q-LAE2} for the Q-LAE algorithm designed for the bounded variance noise setting.

\begin{algorithm}[ht]
\caption{Q-LAE Algorithm under bounded variance noise}\label{alg:Q-LAE2}
\textbf{Input:} time horizon $T$, fail probability $\delta$ \\
\textbf{Initialization:} $\mathcal{A}_1 \leftarrow \text{maximal-} \frac12 \text{ packing of } X$, \\
 $\mathcal{C}_1 \leftarrow X$, $\epsilon_m = 2^{-m}$ for all $m$
\begin{algorithmic}[1]
\FOR{stage $m = 1,2,...,B$}
\STATE {\color{red} $n_m \leftarrow \frac{C_2 \sigma}{\epsilon_m} \log^{3/2}_2\left(\frac{8\sigma}{\epsilon_m}\right) \log_2\left(\log_2\frac{8\sigma}{\epsilon_m}\right) \log\left( \frac{T}{\delta}\right)$}
\FOR{each $x \in \mathcal{A}_m$}
\STATE Play $x$ for the next $n_m$ rounds and obtain $\hat{\mu}_m(x)$ by running the $\text{QMC}_2(\mathcal{O}_{x},\epsilon_m,\delta/T)$ algorithm.
\ENDFOR
\LineComment{selective elimination}
\STATE $\hat{\mu}_{max} \leftarrow \max_{x \in \mathcal{A}_m}\hat{\mu}_m(x)$
\STATE For each $x \in \mathcal{A}_m$, eliminate $x$ if $\hat{\mu}_m(x) < \hat{\mu}_{max} - 3\epsilon_m$. Let $\mathcal{A}_m^+$ denote the set of points not eliminated.
\LineComment{progressive refinement}
\STATE $\mathcal{C}_{m+1} \leftarrow \bigcup_{x \in \mathcal{A}_m^+} \mathcal{B}(x, \epsilon_m)$
\STATE Find a maximal $\epsilon_{m+1}$-packing of $\mathcal{C}_{m+1}$ and define it as $\mathcal{A}_{m+1}$.
\ENDFOR
\end{algorithmic}
\end{algorithm}

\thmblinvariance*
\begin{proof}
    The proof of Theorem~\ref{thm:blin_bounded_variance} follows the same argument of the proof of Theorem~\ref{thm:blin}, using the same notation. The number of rounds $n_m$ each arm is played in stage $m$ changes from $\frac{C_1}{\epsilon_m} \log \left( \frac{T}{\delta} \right) $ to $\frac{C_2 \sigma}{\epsilon_m} \log^{3/2}_2\left(\frac{8\sigma}{\epsilon_m}\right) \log_2\left(\log_2\frac{8\sigma}{\epsilon_m}\right) \log \left( \frac{T}{\delta} \right)$.
    
    For stage $m$ and $x \in Z_{i,m}$, it holds that
    \begin{align*}
    n_m  \Delta_x & \leq \frac{C_2 \sigma}{\epsilon_m} \log^{3/2}_2\left(\frac{8\sigma}{\epsilon_m}\right) \log_2\left(\log_2\frac{8\sigma}{\epsilon_m}\right) \cdot O(\log T) \cdot 2r \\
    & = \frac{C_2 \sigma}{\epsilon_m} \cdot m^{\frac32} \log m \cdot O(\log T) \cdot 2r.
    \end{align*}
    
    Thus, the regret from arms in \( Z_{i,m} \) is
\begin{align*}
    R_{i,m} = \sum_{x \in Z_{i,m}} n_m \Delta_x & \leq \sum_{x \in Z_{i,m}} \frac{C_2 \sigma}{\epsilon_m} \cdot m^{\frac32} \log m \cdot O(\log T) \cdot 2r \\
    & = |Z_{i,m}| \cdot \frac{C_2 \sigma}{\epsilon_m} \cdot m^{\frac32} \log m \cdot O(\log T) \cdot 2r \\
    & \leq C_z r^{-d_z} \cdot \frac{C_2 \sigma}{\epsilon_m} \cdot m^{\frac32} \log m \cdot O(\log T) \cdot 2r.
\end{align*}

Similar to the proof of Theorem~\ref{thm:blin}, we can bound $R_m$ as
\begin{align*}
    R_m &\leq \alpha \cdot T_m + \sum_{i: r = 2^{-i} > \alpha} R_{i,m} \\
& \leq  \alpha \cdot T_m + \sum_{i: r = 2^{-i} > \alpha} \frac{r}{\epsilon_m} \cdot m^{\frac32} \log m \cdot \Theta(\log T) \cdot  C_z r^{-d_z} \\
& \leq  \alpha \cdot T_m + m^{\frac32} \log m  \sum_{i: r = 2^{-i} > \alpha} \Theta(\log T) \cdot  C_z r^{-d_z} \\ 
& \leq \alpha \cdot T_m + m^{\frac32} \log m  \cdot O(C_z \log T) \cdot \frac{1}{\alpha^{d_z}}.
\end{align*}

Then, choosing $\alpha = \left( \frac{ m^{\frac32} \log m  \cdot C_z \log T}{T_m} \right)^{\frac{1}{d_z+1}}$ and substituting gives
\begin{align*}
    R_m & \leq C_z^{\frac{1}{d_z+1}} \cdot T_m^{\frac{d_z}{d_z+1}}  (\log T)^{\frac{1}{d_z+1}} \cdot (m^{\frac32} \log m)^{\frac{1}{d_z+1}}. 
\end{align*}

Summing the regret over all stages $1 \leq m \leq B$ and applying Jensen's inequality yields
\vspace{2mm}
\begin{align*}
    \sum_{m=1}^B R_m  & \leq C_z^{\frac{1}{d_z+1}} \cdot (B^{\frac32} \log B)^{\frac{1}{d_z+1}} \sum_{m=1}^B T_m^{\frac{d_z}{d_z+1}}  (\log T)^{\frac{1}{d_z+1}} \\
& \leq C_z^{\frac{1}{d_z+1}} \cdot B^{\frac52 \frac{1}{d_z+1}} (\log B)^{\frac{1}{d_z+1}} \cdot T^{\frac{d_z}{d_z+1}}  (\log T)^{\frac{1}{d_z+1}} .
\end{align*}
Finally, similar to the proof of Theorem~\ref{thm:blin}, we choose an optimal $B$ and obtain
$$R(T) = O \left( T^{\frac{d_z}{d_z+1}}  (\log T)^{\frac72 \frac{1}{d_z+1}} (\log\log T)^{\frac{1}{d_z+1}} \right).$$

\end{proof}

\section{Analysis of Q-Zooming}\label{app:b}

\subsection{Analysis of Theorem~\ref{thm:q-zooming}}\label{app:zooming}

In this section, we conduct a detailed analysis of Theorem~\ref{thm:q-zooming}, starting with defining a clean event.

\vspace{1mm}
\begin{definition}
A clean event for the Q-Zooming algorithm is defined as the condition where, all \( 1 \leq s \leq m \) and every arm \( x \), the following inequality holds:
\[
|\hat{\mu}_s(x) - \mu(x)| \leq \epsilon_s(x).
\]
\end{definition}

\vspace{0.5mm}
\begin{remark}\textit{
In the proof of Theorem~\ref{thm:q-zooming} below, we demonstrate that the QMC algorithm guarantees the probability of a clean event to be at least \(1 - \delta\).
}
\end{remark}
\vspace{1mm}

We provide a lemma that bounds the optimality gap of any active arm using the confidence radius from the previous stage, assuming the clean event holds. Notably, the value of confidence radius \(\epsilon_s\) decreases at each stage only if the corresponding arm is selected during that stage. This lemma indicates that, in later stages, only arms with small optimality gaps \(\Delta_x\)-- i.e., those with rewards close to the optimal arm's reward -- continue to be selected. This result will play a crucial role in bounding the total regret in Theorem~\ref{thm:q-zooming}.

\vspace{1mm}
\begin{lemma}\label{lem:q1}
    Under the clean event, for any arm  $x$ and each stage $s$, $\Delta_x$ is bounded by $$\Delta_x \leq 3 \epsilon_{s-1}(x).$$
\end{lemma}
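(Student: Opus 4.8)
The plan is to prove Lemma~\ref{lem:q1} by induction on the stage $s$, mirroring the classical Zooming analysis but carefully tracking how the confidence radius only shrinks when an arm is selected. The base case $s=1$ is immediate since $\epsilon_0(x) = 1$ and $\Delta_x \leq 1$ by the assumption that $(X,\mathcal{D})$ has diameter at most $1$, so $\Delta_x \leq 1 = \epsilon_0(x) \leq 3\epsilon_0(x)$. For the inductive step, I would fix a stage $s$ and an arbitrary arm $x$, and split into two cases depending on whether $x$ is selected at stage $s$ (i.e., $x = x_s$) or not.

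If $x \neq x_s$, then $\epsilon_s(x) = \epsilon_{s-1}(x)$ by the update rule in line 8, so the bound $\Delta_x \leq 3\epsilon_{s-1}(x) = 3\epsilon_s(x)$... wait — actually the statement to prove is $\Delta_x \leq 3\epsilon_{s-1}(x)$, so I need to be careful about indexing. Let me restate: for the non-selected case, I want $\Delta_x \leq 3\epsilon_{s-1}(x)$. By the activation rule, $x$ was at some earlier point activated because it was not covered by any active arm's confidence ball, and once activated its confidence radius was $1$; more to the point, since $x$ is active at stage $s$ it was active at some stage $s' \leq s$ when it was first added, and at that moment it was uncovered, meaning... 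Actually the cleanest route: the confidence radius of $x$ equals $\epsilon_{s-1}(x)$, which was last decreased at the most recent stage $t \le s-1$ at which $x$ was selected (or $x$ was never selected, in which case $\epsilon_{s-1}(x) = 1 \geq \Delta_x/3$ trivially). So it suffices to handle the case $x = x_t$ for the relevant stage $t$ and propagate. Hence the key case is really: bound $\Delta_{x_s}$ in terms of $\epsilon_{s-1}(x_s)$.

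For the selected arm $x = x_s$: by the selection rule, $x_s = \argmax_{y \in S} \hat{\mu}_{s-1}(y) + 2\epsilon_{s-1}(y)$. Let $x^*$ be the optimal arm. By the activation rule, at stage $s$ there is some active arm $v \in S$ with $\mathcal{D}(x^*, v) \leq \epsilon_{s-1}(v)$ (every arm, including $x^*$, is covered by the confidence ball of some active arm — otherwise $x^*$ itself would have been activated). Then, using the index-maximality of $x_s$, the clean event, and $1$-Lipschitzness:
\begin{align*}
\hat{\mu}_{s-1}(x_s) + 2\epsilon_{s-1}(x_s) &\geq \hat{\mu}_{s-1}(v) + 2\epsilon_{s-1}(v) \geq \mu(v) - \epsilon_{s-1}(v) + 2\epsilon_{s-1}(v) \\
&= \mu(v) + \epsilon_{s-1}(v) \geq \mu(x^*) - \mathcal{D}(x^*,v) + \epsilon_{s-1}(v) \geq \mu(x^*) = \mu^*.
\end{align*}
On the other hand, the clean event gives $\hat{\mu}_{s-1}(x_s) \leq \mu(x_s) + \epsilon_{s-1}(x_s)$, so combining, $\mu(x_s) + 3\epsilon_{s-1}(x_s) \geq \mu^*$, i.e. $\Delta_{x_s} = \mu^* - \mu(x_s) \leq 3\epsilon_{s-1}(x_s)$. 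This establishes the bound at every stage where $x$ is actually selected; since $\epsilon_{s-1}(x)$ for a non-selected stage equals its value from the last selected stage (and $\Delta_x$ doesn't change), the bound $\Delta_x \leq 3\epsilon_{s-1}(x)$ holds for all $s$ and all active $x$, and for never-selected arms it holds trivially since then $\epsilon_{s-1}(x)=1 \geq \Delta_x$.

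The main obstacle is the bookkeeping around the fact that $\epsilon_s(x)$ only decreases on selection, so one cannot directly induct on $s$ for a fixed arm; the argument must instead anchor the bound at the stages where $x$ is selected and argue the inequality is preserved (trivially) through the intervening stages because both sides are frozen. One should also double-check the edge case where $x^*$ is itself active — then one can simply take $v = x^*$, $\mathcal{D}(x^*, v) = 0$, and the chain of inequalities goes through verbatim. I expect no deeper difficulty; the Lipschitz and clean-event estimates are routine once the indexing is set up correctly.
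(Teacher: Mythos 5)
Your proof is correct and follows essentially the same route as the paper's: bound the selected arm $x_s$ by combining the activation rule (the optimal arm is covered by some active arm's confidence ball), the selection rule (index maximality of $x_s$), the clean event, and Lipschitzness, then handle non-selected arms by noting that both $\Delta_x$ and $\epsilon_{s-1}(x)$ are frozen since the last selection (or $\epsilon = 1$ if never selected). The only cosmetic difference is that you lower-bound the index of $x_s$ by $\mu^*$ and then subtract, whereas the paper chains the same inequalities directly into a bound on $\Delta_{x_s}$.
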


\begin{proof}

Let us first consider the arm \(x_s\), which is selected in stage \(s\). By the activation rule, in stage \(s\), the optimal arm \(x^* = \arg\max \mu(x)\) must be covered by the confidence ball of some active arm \(x^{**}\), i.e., $x^* \in \mathcal{B}(x^{**}, \epsilon_{s-1}(x^{**}))$.

Using the Lipschitz continuity of \(\mu\) and $x^* \in \mathcal{B}(x^{**}, \epsilon_{s-1}(x^{**}))$, we can bound the  \(\Delta_{x_s}\) as follows:
\[
\Delta_{x_s} = \mu^* - \mu(x_s) \leq \mu(x^{**}) + \epsilon_{s-1}(x^{**}) - \mu(x_s).
\]
Next, using the clean event condition, this can be further bounded as
\begin{align*}
\Delta_{x_s} & \leq \mu(x^{**}) + \epsilon_{s-1}(x^{**}) - \mu(x_s) \\
& \leq \hat{\mu}_{s-1}(x^{**}) + 2 \epsilon_{s-1}(x^{**}) -  \hat{\mu}_{s-1}(x_s) +  \epsilon_{s-1}(x_s).
\end{align*}
Furthermore, since $x_s$ was selected in stage $s$, this implies that 
$$\hat{\mu}_{s-1}(x^{**}) + 2 \epsilon_{s-1}(x^{**}) \leq \hat{\mu}_{s-1}(x_s) + 2 \epsilon_{s-1}(x_s)$$
by the selection rule.
Substituting this inequality into the previous bound, we obtain
\begin{align*}
    \Delta_{x_s} & \leq \hat{\mu}_{s-1}(x_s) + 2\epsilon_{s-1}(x_s) - \hat{\mu}_{s-1}(x_s) + \epsilon_{s-1}(x_s) \\
    & = 3\epsilon_{s-1}(x_s).
\end{align*}
This completes the proof for \(x_s\), the arm selected in each stage.

For any arm \(x\) that was not previously selected, \(\epsilon_{s-1}(x) = 1\), and thus the lemma holds trivially. Additionally, for any arm \(x\) that was selected in a prior stage, since \(\epsilon_s(x)\) remains unchanged after the stage in which it was last chosen, the inequality $\Delta_x \leq 3 \epsilon_{s-1}(x)$ continues to hold.

\end{proof}

Using the lemma, we now derive a high-probability upper bound on the regret of the Q-Zooming algorithm.

\thmzooming*
\begin{proof}
Following each stage, Lemma~\ref{lem:qmc} guarantees that QMC, with a sufficient number of queries \( N_s \), provides an estimate \(\hat{\mu}_s(x_s)\) satisfying \(|\hat{\mu}_s(x_s) - \mu(x_s)| \leq \epsilon_s(x_s)\) with probability at least \(1 - \frac{\delta}{m}\), where \( m \) is the total number of stages. By applying the union bound over all stages $1 \leq s \leq m$, it follows that \(|\hat{\mu}_s(x_s) - \mu(x_s)| \leq \epsilon_s(x_s)\) holds for all \( 1 \leq s \leq m \) with probability at least \(1 - \delta\).

Additionally, since \(\epsilon_s(x)\) is set to 1 for arms that have never been played and both \(\epsilon_s(x)\) and \(\hat{\mu}_s(x)\) remain unchanged for arms \( x \) that are not selected during stage $s$, this result can be extended. Specifically, \(|\hat{\mu}_s(x) - \mu(x)| \leq \epsilon_s(x)\) holds for all \( 1 \leq s \leq m \) and every arm \( x \), with probability at least \(1 - \delta\). We define this condition as a clean event and assume it holds throughout the subsequent analysis.

For \( r > 0 \), define the set of near-optimal arms with an optimality gap between \( r \) and \( 2r \) as
\[
X_r := \left\{ x \in X : r \leq \Delta_x < 2r \right\}.
\]
For \( i \in \mathbb{N} \), let \( r = 2^{-i} \) and define \( Y_i := X_r\).
The regret contribution from the arms in \( Y_i \) can be expressed as
\[
R_i(T) = \sum_{x \in Y_i} M_x(T) \, \Delta_x,
\]
where \( M_x(T) \) denotes the number of times arm \( x \) is pulled up to time \( T \). Additionally, let \( s(x) \) represent the last stage in which arm \( x \) was played before time \( T \). We claim that
\[
M_x(T) \leq 2 N_{s(x)}.
\]
Let \( k(x) \) be the number of stages where arm \( x \) was played. Then, the total number of pulls for arm \( x \) is given by
\[
M_x(T) = \sum_{i=1}^{k(x)} 2^i C_1 \log \left( \frac{m}{\delta} \right).
\]
Since \( \sum_{i=1}^{k(x)} 2^i = 2^{k(x)+1} - 2 \), it follows that
\[
M_x(T) \leq 2^{k(x)+1} C_1 \log \left( \frac{m}{\delta} \right) = 2 N_{s(x)}.
\]
Substituting this bound into \( R_i(T) \), we get
\[
R_i(T) \leq \sum_{x \in Y_i} \frac{2 C_1}{\epsilon_{s(x)}(x)} \log \left( \frac{m}{\delta} \right) \Delta_x.
\]
Using Lemma~\ref{lem:q1}, we can bound \(\Delta_x\) as follows: 
\begin{equation}\label{eq:3}
    \Delta_x \leq 3 \epsilon_{s(x)-1}(x) \leq 6 \epsilon_{s(x)}(x).
\end{equation}

In addition, we claim that the number of elements in \( Y_i \) is bounded by \( N_z(r) \). Under the clean event, for any pair of active arms \(x, y \in Y_i\), we have \(\mathcal{D}(x, y) > \frac{r}{3}\). This is because, without loss of generality, suppose \(y\) was activated after \(x\), and let \(s\) denote the stage in which \(y\) was activated. This means, at this stage $s$, \(y\) was not covered by any confidence ball of active arms, including \(x\). Therefore:
\[
\mathcal{D}(x, y) > \epsilon_{s-1}(x),
\]
and by Lemma~\ref{lem:q1}, we can further deduce
\[
\mathcal{D}(x, y) > \epsilon_{s-1}(x) \geq \frac{1}{3} \Delta_x \geq \frac{1}{3} r.
\]

As a result, when the set \(Y_i\) is covered by sets of diameter \(r/3\), arms \(x\) and \(y\) cannot belong to the same set. This implies that the number of elements in \( Y_i \) is bounded by \( N_z(r) \). Using this and Equation~\eqref{eq:3}, we can derive the following bound for the regret contribution from arms in \( Y_i \):

\begin{align*}
    R_i(T) & \leq \sum_{x \in Y_i} 12 C_1 \log \left( \frac{m}{\delta} \right) \\
    & \leq 12 C_1 \log \left( \frac{m}{\delta} \right) \cdot N_{z}(r).
\end{align*}
Letting \(\delta \geq \frac{1}{T}\) and noting \(m \leq T\), this simplifies to:
\[
R_i(T) \leq O(\log T) \cdot N_{z}(r).
\]
Next, consider a fixed \( \alpha > 0 \) and analyze the arms based on whether \( \Delta_x \leq \alpha \) or \( \Delta_x > \alpha \). The total regret $R(T)$ can then be written as:
\[
R(T) = \alpha T + \sum_{i : r = 2^{-i} > \alpha} R_i(T).
\]
By substituting the bound for \( R_i(T) \), we obtain:
\begin{align*}
R(T) & \leq \alpha T + \sum_{i : r = 2^{-i} > \alpha} O(\log T) \cdot C_z  r^{-d_z} \\
& \leq \alpha T + O(C_z \log T) \cdot \frac{1}{\alpha^{d_z}},
\end{align*}
where the first inequality follows from the definition of the zooming dimension. 
Choosing \( \alpha = \left( \frac{C_z \log T}{T} \right)^{\frac{1}{d_z+1}} \), we conclude:
\[
R(T) = O \left( T^{\frac{d_z}{d_z+1}} \cdot (C_z \log T)^{\frac{1}{d_z+1}} \right) = O \left( T^{\frac{d_z}{d_z+1}} \cdot (\log T)^{\frac{1}{d_z+1}} \right).
\]
\end{proof}

\vspace{1mm}

\subsection{Analysis of Theorem~\ref{thm:zooming_bounded_variance}}\label{app:zooming_bounded_variance}

In the case where the reward contains noise with bounded variance, the Q-Zooming algorithm replaces \(\text{QMC}_1\) with \(\text{QMC}_2\). Furthermore, the number of rounds played in each stage is modified to align with the query requirements of \(\text{QMC}_2\), as outlined in Lemma~\ref{lem:qmc}. See Algorithm~\ref{alg:q-zooming2} for the implementation of the Q-Zooming algorithm designed for the bounded variance noise setting.

\begin{algorithm}[ht]
\caption{Q-Zooming Algorithm}\label{alg:q-zooming2}
\textbf{Input:} time horizon $T$, fail probability $\delta$ \\
\textbf{Initialization:} active arms set $S \leftarrow \emptyset$, confidence radius $\epsilon_0(\cdot)=1$
\begin{algorithmic}[1]
\FOR{stage $s = 1,2,...$}
\LineComment{activation rule}
\IF{there exists an arm $y$ that is not covered by the confidence balls of active arms}
\STATE add any such arm $y$ to the active set: $S \leftarrow S \cup \{ y \}$
\ENDIF
\LineComment{selection rule}
\STATE $x_s \leftarrow \argmax_{x \in S} \hat{\mu}_{s-1}(x)+ 2 \epsilon_{s-1}(x) $
\STATE $\epsilon_s(x) \leftarrow \begin{cases}
    \epsilon_{s-1}(x) /2  \quad \text{if } x = x_s, \\
    \epsilon_{s-1}(x)  \quad \text{if } x \neq x_s.
\end{cases}$
\STATE  {\color{red} $N_{s} \leftarrow \frac{C_2 \sigma}{\epsilon_s(x_s)} \log^{3/2}_2\left(\frac{8\sigma}{\epsilon_s(x_s)}\right) \log_2\left(\log_2\frac{8\sigma}{\epsilon_s(x_s)}\right) \log\left( \frac{m}{\delta}\right)$}
\STATE If $\sum_{k=1}^{s} N_s > T$, terminate the algorithm.
\STATE Play $x_s$ for the next $N_s$ rounds and obtain $\hat{\mu}_s(x_s)$ by running the $\text{QMC}_2(\mathcal{O}_{x_s},\epsilon_s(x_s),\delta/m)$ algorithm.\\
\quad \textit{Note: For all other arms \( x \neq x_s \), retain the reward estimates \( \hat{\mu}_{s-1}(x) \) from stage \( s-1 \).}
\ENDFOR
\end{algorithmic}
\end{algorithm}

\thmzoomingvariance*
\begin{proof}
    The proof of Theorem~\ref{thm:zooming_bounded_variance} proceeds similarly to the proof of Theorem~\ref{thm:q-zooming} and employs the same notation. The number of rounds $N_s$ that arm $x_s$ is played in stage $s$ changes from $\frac{C_1}{\epsilon_s(x_s)} \log \left( \frac{m}{\delta} \right) $ to $\frac{C_2 \sigma}{\epsilon_s(x_s)} \log^{3/2}_2\left(\frac{8\sigma}{\epsilon_s(x_s)}\right) \log_2\left(\log_2\frac{8\sigma}{\epsilon_s(x_s)}\right) \log \left( \frac{m}{\delta} \right)$.

    Additionally, we can show that \( M_x(T) \leq 2N_{s(x)} \) as follows:
\begin{align*}
    M_x(T) & = \sum_{i=1}^{k(x)} 2^i C_2 \sigma \cdot \log^{3/2}_2\left(\frac{8\sigma}{2^{-i}}\right) \log_2\left(\log_2\frac{8\sigma}{2^{-i}}\right) \log \left( \frac{m}{\delta} \right) \\
    & \leq  2^{k(x)+1} C_2 \sigma \cdot \log^{3/2}_2\left(\frac{8\sigma}{2^{-k(x)}}\right) \log_2\left(\log_2\frac{8\sigma}{2^{-k(x)}}\right) \log \left( \frac{m}{\delta} \right) = 2 N_{s(x)}.
\end{align*}
Therefore, the regret \( R_i(T) \) can be bounded as
\begin{align*}
    R_i(T) & = \sum_{x \in Y_i} M_x(T) \, \Delta_x \\
    & \leq \sum_{x \in Y_i} 2N_{s(x)} \, \Delta_x \\
    & \leq \sum_{x \in Y_i} \frac{2C_2 \sigma}{\epsilon_{s(x)}(x)} \log^{3/2}_2\left(\frac{8\sigma}{\epsilon_{s(x)}(x)}\right) \log_2\left(\log_2\frac{8\sigma}{\epsilon_{s(x)}(x)}\right) \log \left( \frac{m}{\delta} \right) \cdot 6 \epsilon_{s(x)}(x) \\
    & \leq O\left( \left( \log T \right)^\frac52 \log\log T   \right)\cdot N_{z}(r).
\end{align*}
This result follows from Lemma~\ref{lem:q1} and the fact that \(\epsilon_{s(x)}(x) = 2^{-k(x)}\) and \(k(x) \leq \log_2 T\).

Again, following the proof of Theorem~\ref{thm:q-zooming}, we analyze the arms based on whether their optimality gap exceeds \(\alpha\). By choosing \( \alpha = \left( \frac{C_z (\log T)^\frac52 \log\log T}{T} \right)^{\frac{1}{d_z+1}} \), we can conclude that
\begin{align*}
    R(T) & = O \left( C_z^{\frac{1}{d_z+1}} \cdot T^{\frac{d_z}{d_z+1}}  (\log T)^{\frac52 \frac{1}{d_z+1}} (\log\log T)^{\frac{1}{d_z+1}} \right) \\
    & = O \left( T^{\frac{d_z}{d_z+1}}  (\log T)^{\frac52 \frac{1}{d_z+1}} (\log\log T)^{\frac{1}{d_z+1}} \right).
\end{align*}

\end{proof}

\section{Related Work on Lipschitz Bandits}\label{app:related-lipschitz}
Lipschitz bandits, along with other notable extensions of the multi-armed bandit framework such as dueling bandits~\cite{yue2012k,yi2024biased,verma2024neural}, kernelized bandits~\cite{dai2019bayesian,dai2024batch,chowdhury2017kernelized}, and parametric bandits~\cite{filippi2010parametric,abbasi2011improved,kang2022efficient, kang2025single}, have been extensively explored to tackle increasingly complex problem settings.
Most prior research on the stochastic Lipschitz bandits builds on two primary approaches. The first involves uniformly discretizing the action space into a fixed grid, enabling the application of any standard multi-armed bandit algorithms~\cite{kleinberg2004nearly,magureanu2014lipschitz}. The second approach emphasizes dynamic discretization, where more arms are sampled in promising regions of the action space. In this approach, classical methods such as Upper Confidence Bound (UCB)~\cite{bubeck2008online, kleinberg2019bandits,lu2019optimal}, Thompson Sampling (TS)~\cite{kang2024online}, and elimination method~\cite{feng2022lipschitz} can be utilized to balance the exploration-exploitation tradeoff.

Furthermore, \citet{podimata2021adaptive} extended the adaptive discretization framework to address adversarial rewards and derived instance-dependent regret bounds. Recently, \citet{kang2023robust} explored an intermediate setting by introducing Lipschitz bandits with adversarial corruptions, bridging the gap between stochastic and adversarial scenarios. Beyond these works, various extensions of Lipschitz bandits have been thoroughly explored, including contextual Lipschitz bandits~\cite{slivkins2011contextual}, non-stationary Lipschitz bandits~\cite{kang2024online}, and taxonomy bandits~\cite{slivkins2011multi}. Given the broad applicability of Lipschitz bandits in various domains~\cite{slivkins2019introduction}, our work improves existing algorithms both theoretically and empirically by incorporating recent advances in quantum machine learning.

\vspace{1mm}
\section{Upper bound on Zooming Dimension}\label{app:zooming_dimension}

Consider \((X, \mathcal{D})\) as a compact doubling metric space with a covering dimension \(d_c\) and a diameter of at most 1.
As established by Assouad’s embedding theorem~\cite{assouad1983plongements}, any compact doubling metric space \((X, \mathcal{D})\) is embeddable into a Euclidean space with an appropriate metric. Consequently, we can focus our analysis on the space \([0,1]^{d_c}\) under some distance $\mathcal{D}$ without any loss of generality.
Consequently, the reward function \(\mu\) is defined on \([0,1]^{d_c}\), and we denote the optimal arm as \(x^* = \arg\max_{x \in [0,1]^{d_c}} \mu(x)\).

The goal of this section is to demonstrate that the zooming dimension \(d_z\) is often significantly smaller than the covering dimension \(d_c\). This result underscores the fact that the regret bound \(\tilde{O}\left(T^{d_z / (d_z + 1)}\right)\), achieved by our two algorithms, Q-LAE (Section~\ref{sec:Q-LAE}) and Q-Zooming (Section~\ref{sec:q-zooming}), represents a substantial improvement over the optimal regret bound of \(\tilde{O}\left(T^{(d_z + 1) / (d_z + 2)}\right)\) attained by classical Lipschitz bandit algorithms.

We now present a proposition that provides an upper bound on the zooming dimension \(d_z\), along with a rigorous proof.

\begin{proposition}[Relation between zooming and covering dimension] \label{prop:zooming_covering}
Let \( d_z \) be the zooming dimension and \( d_c \) the covering dimension of the metric space. Then, the following holds:
\begin{itemize}
    \item The zooming dimension is bounded as \( 0 \leq d_z \leq d_c \).
    \item Under some mild conditions on \( \mu \), the zooming dimension \( d_z \) can be significantly smaller than \( d_c \). Specifically, if the reward function \( \mu \) satisfies a polynomial growth condition around \( x^* \), that is, if there exist constants \( C > 0 \) and \( \beta > 0 \) such that for all \( x \) in a neighborhood of \( x^* \),
    \[
    \mu(x^*) - \mu(x) \geq C \|x^* - x\|^\beta,
    \]
    then the zooming dimension satisfies:
    \[
    d_z \leq \left( 1 - \frac{1}{\beta} \right) d_c.
    \]
\end{itemize}
\end{proposition}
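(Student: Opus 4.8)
The plan is to establish both bullet points separately, starting from the definitions of the zooming number $N_z(r)$ and the zooming dimension $d_z$, and comparing against the covering number $N_c(r)$ of the full space.

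First I would handle the inequality $0 \le d_z \le d_c$. The lower bound $d_z \ge 0$ is immediate from the definition, since $d_z$ is taken as a minimum over $d \ge 0$. For the upper bound, the key observation is that $X_r = \{x : r \le \Delta_x < 2r\} \subseteq X$, so any $r/3$-covering of $X$ restricts to an $r/3$-covering of $X_r$, giving $N_z(r) \le N_c(r/3)$. Since $(X,\mathcal{D})$ is a compact doubling metric space with diameter at most $1$ and covering dimension $d_c$, by definition of $d_c$ there is a constant $\alpha_c$ with $N_c(\rho) \le \alpha_c \rho^{-d_c}$ for all $\rho \in (0,1]$; substituting $\rho = r/3$ yields $N_z(r) \le \alpha_c 3^{d_c} r^{-d_c}$, so $d_z \le d_c$ by the definition of the zooming dimension as the infimal such exponent.

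Next I would prove the refined bound under the polynomial growth condition $\mu(x^*) - \mu(x) \ge C\|x^* - x\|^\beta$ near $x^*$. The idea: if $x \in X_r$, then $\Delta_x < 2r$, so $C\|x^*-x\|^\beta \le \Delta_x < 2r$, which forces $\|x^* - x\| \le (2r/C)^{1/\beta}$. Hence $X_r$ is contained in a ball of radius $O(r^{1/\beta})$ around $x^*$ (for $r$ small enough that the neighborhood condition applies; larger $r$ contribute only a constant and don't affect the exponent). Covering a radius-$O(r^{1/\beta})$ ball in a doubling space of covering dimension $d_c$ by $r/3$-balls requires at most $O\big((r^{1/\beta}/r)^{d_c}\big) = O\big(r^{-(1 - 1/\beta)d_c}\big)$ balls (using the doubling/covering-number estimate, e.g. the ratio-of-radii bound analogous to Lemma 5.7 of \citealp{wainwright_2019}, after invoking Assouad's embedding theorem to reduce to a Euclidean setting as the paper does elsewhere). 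Therefore $N_z(r) = O\big(r^{-(1-1/\beta)d_c}\big)$, and by the definition of $d_z$ we conclude $d_z \le (1 - 1/\beta)d_c$.

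The main obstacle is the second part: one must be careful that the polynomial growth condition holds only \emph{in a neighborhood} of $x^*$, so the argument bounding $\|x^*-x\|$ is valid only for $r \le r_0$ for some threshold; for $r \in (r_0,1]$ one falls back on the trivial bound $N_z(r) \le N_c(r/3) \le \alpha_c 3^{d_c} r^{-d_c} \le \alpha_c 3^{d_c} r_0^{-d_c} r^{-(1-1/\beta)d_c}$, absorbing the finitely-bounded-below quantity into the constant. One also needs to confirm the covering-number-of-a-small-ball estimate carefully in the doubling metric (rather than strictly Euclidean) setting; invoking Assouad's theorem as the paper does in Appendix D, together with the standard radius-ratio covering bound, handles this cleanly. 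Combining the two ranges of $r$ gives a single constant $\alpha$ with $N_z(r) \le \alpha r^{-(1-1/\beta)d_c}$ for all $r \in (0,1]$, completing the proof.
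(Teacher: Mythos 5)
Your proposal matches the paper's own proof essentially step for step: the first bullet follows from $X_r \subseteq X$ and hence $N_z(r) \le N_c(r)$, and the second from noting that the growth condition places $X_r$ inside a ball of radius $(2r/C)^{1/\beta}$ around $x^*$, which is then covered by $O\bigl((r^{1/\beta}/r)^{d_c}\bigr)$ balls of radius $r/3$ via Assouad's embedding and the standard radius-ratio covering estimate. Your explicit handling of the large-$r$ regime (falling back on the trivial bound for $r > r_0$) is a minor tightening of the same argument, which the paper handles implicitly by restricting to $r < C\delta^\beta$.
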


\begin{proof}
We begin by recalling the definition of the zooming number \( N_z(r) \) from Section~\ref{sec:prelim}. The zooming number \( N_z(r) \) represents the minimal number of radius-\( r/3 \) balls needed to cover the set of near-optimal arms
\[
X_r = \{x \in X : r \leq \Delta_x < 2r\}.
\]
Similarly, we define the covering number \( N_c(r) \) as the minimal number of radius-\( r/3 \) balls required to cover the entire set \( X \).
Since \( X_r \subseteq X \) by definition, it follows directly that
\[
N_z(r) \leq N_c(r).
\]
Next, we recall the formal definitions of the zooming and covering dimensions. The zooming dimension \( d_z \) is defined as
\[
d_z := \min \{ d \geq 0 : \exists \alpha > 0, \, N_z(r) \leq \alpha r^{-d}, \, \forall r \in (0, 1] \}.
\]
Similarly, the covering dimension \( d_c \) is given by:
\[
d_c := \min \{ d \geq 0 : \exists \alpha > 0, \, N_c(r) \leq \alpha r^{-d}, \, \forall r \in (0, 1] \}.
\]
By definition, \( d_z \geq 0 \) holds trivially. Additionally, since we established that \( N_z(r) \leq N_c(r) \) for all \( r \), it follows that:
\[
d_z \leq d_c.
\]
This completes the first proof.

Secondly, we assume that the reward function \( \mu \) satisfies a polynomial growth condition around the optimal arm \( x^* \). That is, there exist constants \( C > 0 \), \( \beta > 0 \), and \( \delta > 0 \) such that for all \( x \) satisfying \( \|x^* - x\| \leq \delta \), we have:
\[
\mu(x^*) - \mu(x) \geq C \|x^* - x\|^\beta.
\]

Then, for any \( r \) satisfying \( 0 < r < C \delta^\beta \), it follows from the polynomial growth condition that:
\begin{align*}
    X_r = \{ x : r \leq \Delta_x = \mu(x^*) - \mu(x) < 2r \} & \subseteq \{ x : C \|x^* - x\|^\beta < 2r  \} \\
    & =  \left\{ x :  \|x^* - x\| < \left( \frac{2r}{C} \right)^\frac{1}{\beta} \right\}.  
\end{align*}

Now, define the set $Y := \left\{ x : \|x^* - x\| < \left(\frac{2r}{C}\right)^{\frac{1}{\beta}} \right\}$.
This represents an Euclidean ball centered at \( x^* \) with radius \( \left(\frac{2r}{C}\right)^{\frac{1}{\beta}} \). To cover \( Y \) with radius-\( r/3 \) balls, we require at most of the order of 
$$\left( \frac{ \left( \frac{2r}{C} \right)^\frac{1}{\beta}}{ \frac{r}{3}} \right)^{d_c} = \left( \frac{3 \times 2^{\frac{1}{\beta}}}{C^\frac{1}{\beta}}\right)^{d_c} \cdot r^{-\left( 1- \frac{1}{\beta} \right) d_c}.$$
Since \( X_r \subseteq Y \), this serves as an upper bound for \( N_z(r) \), leading to:
\[
d_z \leq \left( 1 - \frac{1}{\beta} \right) d_c.
\]
This concludes the second proof.
\end{proof}

This proposition illustrates that the zooming dimension $d_z$ is often strictly smaller than the covering dimension $d_c$. For example, if the reward function $\mu(\cdot)$ is twice continuously differentiable $C^2$-smooth and strongly concave in a neighborhood around $x^*$, then the zooming dimension satisfies $d_z \leq \frac{d_c}{2}$.


\end{document}